\newcommand{\bOmega}{\boldsymbol{\Omega}}
\newcommand{\bomega}{\boldsymbol{\omega}}
\LetLtxMacro\orgvdots\vdots
\LetLtxMacro\orgddots\ddots
\DeclareRobustCommand\vdots{
  \mathpalette\@vdots{}
}
\newcommand*{\@vdots}[2]{
  \sbox0{$#1\cdotp\cdotp\cdotp\m@th$}
  \sbox2{$#1.\m@th$}
  \vbox{
    \dimen@=\wd0 
    \advance\dimen@ -3\ht2 
    \kern.5\dimen@
    \dimen@=\wd2 
    \advance\dimen@ -\ht2 
    \dimen2=\wd0 
    \advance\dimen2 -\dimen@
    \vbox to \dimen2{
      \offinterlineskip
      \copy2 \vfill\copy2 \vfill\copy2 
    }
  }
}
\DeclareRobustCommand\ddots{
  \mathinner{
    \mathpalette\@ddots{}
    \mkern\thinmuskip
  }
}
\newcommand*{\@ddots}[2]{
  \sbox0{$#1\cdotp\cdotp\cdotp\m@th$}
  \sbox2{$#1.\m@th$}
  \vbox{
    \dimen@=\wd0 
    \advance\dimen@ -3\ht2 
    \kern.5\dimen@
    \dimen@=\wd2 
    \advance\dimen@ -\ht2 
    \dimen2=\wd0 
    \advance\dimen2 -\dimen@
    \vbox to \dimen2{
      \offinterlineskip
      \hbox{$#1\mathpunct{.}\m@th$}
      \vfill
      \hbox{$#1\mathpunct{\kern\wd2}\mathpunct{.}\m@th$}
      \vfill
      \hbox{$#1\mathpunct{\kern\wd2}\mathpunct{\kern\wd2}\mathpunct{.}\m@th$}
    }
  }
}
\let\save@mathaccent\mathaccent
\newcommand*\if@single[3]{
    \setbox0\hbox{${\mathaccent"0362{#1}}^H$}
    \setbox2\hbox{${\mathaccent"0362{\kern0pt#1}}^H$}
    \ifdim\ht0=\ht2 #3\else #2\fi
    }
\newcommand*\rel@kern[1]{\kern#1\dimexpr\macc@kerna}
\newcommand*\widebar[1]{{\@ifnextchar^{{\wide@bar{#1}{0}}}{\wide@bar{#1}{1}}}}
\newcommand*\wide@bar[2]{\if@single{#1}{\wide@bar@{#1}{#2}{1}}{\wide@bar@{#1}{#2}{2}}}
\newcommand*\wide@bar@[3]{
\begingroup
\def\mathaccent##1##2{
    \let\mathaccent\save@mathaccent
    \if#32 \let\macc@nucleus\first@char \fi
    \setbox\z@\hbox{$\macc@style{\macc@nucleus}_{}$}
    \setbox\tw@\hbox{$\macc@style{\macc@nucleus}{}_{}$}
    \dimen@\wd\tw@
    \advance\dimen@-\wd\z@
    \divide\dimen@ 3
    \@tempdima\wd\tw@
    \advance\@tempdima-\scriptspace
    \divide\@tempdima 10
    \advance\dimen@-\@tempdima
    \ifdim\dimen@>\z@ \dimen@0pt\fi
    \rel@kern{0.6}\kern-\dimen@
    \if#31
        \overline{\rel@kern{-0.6}\kern\dimen@\macc@nucleus\rel@kern{0.4}\kern\dimen@}
        \advance\dimen@0.4\dimexpr\macc@kerna
        \let\final@kern#2
        \ifdim\dimen@<\z@ \let\final@kern1\fi
        \if\final@kern1 \kern-\dimen@\fi
    \else
        \overline{\rel@kern{-0.6}\kern\dimen@#1}
    \fi
}
\macc@depth\@ne
\let\math@bgroup\@empty \let\math@egroup\macc@set@skewchar
\mathsurround\z@ \frozen@everymath{\mathgroup\macc@group\relax}
\macc@set@skewchar\relax
\let\mathaccentV\macc@nested@a
\if#31
    \macc@nested@a\relax111{#1}
\else
    \def\gobble@till@marker##1\endmarker{}
    \futurelet\first@char\gobble@till@marker#1\endmarker
    \ifcat\noexpand\first@char A\else
        \def\first@char{}
    \fi
    \macc@nested@a\relax111{\first@char}
\fi
\endgroup
}
\newtheorem{definition}{Definition}
\newtheorem*{proposition*}{Proposition}
\def\eqref#1{equation~\ref{#1}}
\def\1{\bm{1}}
\def\vzero{{\bm{0}}}
\def\vone{{\bm{1}}}
\def\vb{{\bm{b}}}
\def\vw{{\bm{w}}}
\def\vx{{\bm{x}}}
\def\evb{{b}}
\def\mB{{\bm{B}}}
\def\mT{{\bm{T}}}
\def\mW{{\bm{W}}}
\def\mX{{\bm{X}}}
\def\mZ{{\bm{Z}}}
\DeclareMathAlphabet{\mathsfit}{\encodingdefault}{\sfdefault}{m}{sl}
\SetMathAlphabet{\mathsfit}{bold}{\encodingdefault}{\sfdefault}{bx}{n}
\newcommand{\tens}[1]{\bm{\mathsfit{#1}}}
\def\tA{{\tens{A}}}
\def\cB{{\mathcal{B}}}
\def\cC{{\mathcal{C}}}
\def\cD{{\mathcal{D}}}
\def\cG{{\mathcal{G}}}
\def\cI{{\mathcal{I}}}
\def\cL{{\mathcal{L}}}
\def\cU{{\mathcal{U}}}
\def\cW{{\mathcal{W}}}
\def\cX{{\mathcal{X}}}
\def\cY{{\mathcal{Y}}}
\def\sR{{\mathbb{R}}}
\def\sS{{\mathbb{S}}}
\def\sZ{{\mathbb{Z}}}
\DeclareMathOperator*{\argmin}{arg\,min}
\newcommand{\update}[1]{{#1}}
\newcommand{\noise}{U}
\newcommand{\Appendix}{Supplementary Material\xspace}
\newcommand{\CFCinvariant}{CG-invariant\xspace}
\newcommand{\CFCinvariance}{CG-invariance\xspace}
\newcommand{\CFCinvariances}{CG-invariances\xspace}
\newcommand{\cfX}{X_{\noise_\Sbar \leftarrow \widetilde{\noise}_\Sbar}}
\newcommand{\obsX}{X}
\newcommand{\cfXu}[1]{X_{\noise_\Sbar \leftarrow #1}}
\newcommand{\Sbar}{\cI}
\newcommand{\Sdep}{\cD}
\newcommand{\overgroup}{\cG_{\Sdep \cup \Sbar}}
\newcommand{\mytitle}{Neural Networks for Learning Counterfactual G-Invariances from Single Environments}
\title{\mytitle}
\author{
    S Chandra Mouli \\
    Department of Computer Science\\
    Purdue University \\
    \texttt{chandr@purdue.edu} \\
    \And
    Bruno Ribeiro \\
    Department of Computer Science\\
    Purdue University \\
    \texttt{ribeiro@cs.purdue.edu} \\
}
\begin{document}

\maketitle

\begin{abstract}
Despite ---or maybe because of--- their astonishing capacity to fit data, neural networks are believed to \update{have difficulties extrapolating} beyond training data distribution. 
This work shows that, for extrapolations based on finite transformation groups, a model's inability to extrapolate is unrelated to its capacity. 
Rather, the shortcoming is inherited from a learning hypothesis: \update{\em Examples not explicitly observed with infinitely many training examples have underspecified outcomes in the learner’s model.}
In order to endow neural networks with the ability to extrapolate over group transformations, we introduce a  learning framework \update{counterfactually-guided} by the learning hypothesis that {\em any group invariance to \update{(known)} transformation groups is mandatory even without evidence, unless the learner deems it inconsistent with the training data}.
Unlike existing invariance-driven methods for \update{(counterfactual) extrapolations}, this framework allows extrapolations from a single environment.
Finally, we introduce sequence and image extrapolation tasks that validate our framework and showcase the shortcomings of traditional approaches.
\end{abstract}

\section{Introduction}
Neural networks are widely praised for their ability to interpolate the training data.
However, \update{in some applications, they have also been shown to be unable to learn patterns that can provably extrapolate out-of-distribution (beyond the training data distribution)~\citep{arjovsky2019invariant,d2020underspecification,de2019causal,geirhos2020shortcut,mccoy2019berts,scholkopf2019causality}}.

Recent counterfactual-based learning frameworks for  extrapolation tasks ---such as ICM and IRM~\citep{arjovsky2019invariant,besserve2018group,johansson2016learning,louizos2017causal,peters2017elements,scholkopf2019causality,krueger2020} detailed in \Cref{sec:related}--- assume the learner is given data from multiple environmental conditions (say environments E1 and E2) and is expected to learn patterns that work well over an unseen environment E3.
In particular, the key idea behind IRM is to force the neural network to learn an internal representation of the input data that is invariant to environmental changes between E1 and E2, and, hence, hopefully also invariant to  E3, \update{which may not be true for nonlinear classifiers~\citep{rosenfeld2020risks}.}
While successful for a class of extrapolation tasks, these frameworks require multiple environments in the training data.
{\em But, are we asking the impossible? Can humans even perform single-environment extrapolation?}

Young children, unlike monkeys and baboons,
assume that a conditional stimulus \text{F} given another stimulus \text{D}  extrapolates to a symmetric relation \text{D} given \text{F} without ever seeing any such examples~\citep{sidman1982search}. E.g., if given \text{D}, action \text{F} produces a treat, the child assumes that given F, action D also produces a treat.
Young children differ from primates in their ability to use symmetries to build conceptual relations beyond visual patterns~\citep{sidman1982conditional,westphal2012production}, allowing extrapolations from intelligent reasoning.
However, forcing symmetries against data evidence is undesirable, since
symmetries can provide valuable evidence when they are broken.

Unfortunately, single-environment extrapolations have not been addressed in the literature.
The challenge comes from a learning framework where \update{\em examples not explicitly observed with infinitely many independent training examples are underspecified in the learner's statistical model}, which is shared by both objective (frequentist) and subjective (Bayesian) learner's frameworks.
For instance, consider a supervised learning task where the training data contains infinitely many sequences $x^\text{(tr)}\!=$(\text{A},\text{B}) associated with label $y^\text{(tr)}\!=\!\text{\text{C}}$, but no examples of a sequence $x^\text{(tr)}\!=$(\text{B},\text{A}). If given a test example $x^\text{(te)}\!=$(\text{B},\text{A}), the hypothesis considers it to be out of distribution and the prediction $P(Y^\text{(te)}\!=\!\text{\text{C}}|X^\text{(te)}=\text{(\text{B},\text{A}}))$ is undefined, since $P(X^\text{(tr)}\!=\!(\text{\text{B},\text{A}})) = 0$. This happens regardless of a prior over $P(X^\text{(tr)})$.
\update{This {\em unseen-is-underspecified} learning hypothesis is not guaranteed to push neural networks to assume symmetric extrapolations without evidence.}

{\bf Contributions.}
Since symmetries are intrinsically tied to human single-environment extrapolation capabilities, \update{\em
this work explores a learning framework that modifies the learner's hypothesis space to allow symmetric extrapolation (over known groups) without evidence, while not losing valuable antisymmetric information if observed to predict the target variable in the training data.}
Formally, a symmetry is an invariance to transformations of a group, known as a {\em G-invariance}.
In \Cref{thm:InvCounterfactual} we show that the counterfactual invariances needed for symmetry extrapolation ---denoted Counterfactual G-invariances (\CFCinvariances)--- are stronger than traditional  G-invariances.
\Cref{thm:normalsubgroup}, then, introduces a condition in the structural causal model where G-invariances of linear automorphism groups are safe to use as \CFCinvariances.
With that, \Cref{lem:partition} defines a partial order over the appropriate invariant subspaces that we use to learn the correct G-invariances from a single environment without evidence, while retaining the ability to be sensitive to antisymmetries shown to be relevant in the training data.
Finally, we introduce sequence and image counterfactual extrapolation tasks with experiments that validate the theoretical results and showcase the advantages of our approach.
 
\section{Related Work} \label{sec:related}
{\bf Counterfactual inference and invariances.} 
Recent efforts have brought counterfactual inference to machine learning models.
{\em Independent causal mechanism (ICM) and Invariant Risk Minimization (IRM)} methods~\citep{arjovsky2019invariant,besserve2018group,johansson2016learning,parascandolo2018learning,scholkopf2019causality}, {\em Causal Discovery from Change (CDC)} methods~\citep{tian2001causal}, and {\em representation disentanglement} methods~\citep{bengio2019meta,goudet2017causal} 
broadly look for representations, classifiers, or mechanism descriptions, that are invariant across multiple environments observed in the training data or inferred from the training data~\citep{creager2020}.
They rely on multiple environment samples in order to reason over new environments.
To the best of our knowledge there is no clear effort for extrapolations from a single environment.
The key similarity between the ICM framework and our framework  is the assumption of independently sampled mechanisms (the transformations) and causes. 

{\bf Domain adaptation and domain generalization.}
Domain adaptation and domain generalization (e.g.~ \citep{long2017deep,muandet2013domain,quionero2009dataset,rojas2018invariant,shimodaira2000improving,zhang2015multi} and others) ask questions about specific ---observed or known--- changes in the data distribution rather than counterfactual questions.
A key difference is that counterfactual inference accounts for hypothetical interventions, not known ones.

{\bf Forced G-invariances.}
{\em Forcing a G-invariance may contradict the training data, where the target variable is actually influenced by the transformation of the input.}
For instance, handwritten digits are not invariant to 180$^o$ rotations, since digits 6 and 9 would get confused.
Data augmentation is a type of forced G-invariance~\citep{chen2019invariance,lyle2020benefits} and hence, will fail to extrapolate.
Other works forcing G-invariances that will also fail include (not an extensive list): \citet{zaheer2017} and \citet{murphy2019a,murphy2019b} for permutation groups over set and graph inputs; \citet{cohen2016}, \citet{cohen2019general} for dihedral and spherical transformation groups over images.

{\bf Learning invariances from training data.} 
The parallel work of \cite{benton2020learning} considers learning image invariances from the training data, however does not consider extrapolation tasks.
Moreover, it does not provide a concrete theoretical proof of invariance, relying on experimental results over interpolation tasks for validation.
Another parallel work \citep{zhou2020meta} uses meta-learning to learn symmetries that are shared across several tasks (or environments).
The works of \cite{van2018learning} and \cite{anselmi2019} focus on learning invariances from training data for better generalization error of the training distribution. 
However, none of these works consider the extrapolation task. 
In contrast, our framework formally considers {\em counterfactual} extrapolation, for which we provide both theoretical and experimental results.

\section{Extrapolations from a Single Environment}\label{sec:theory}
\vspace{-5pt}

\begin{wrapfigure}{r}{0.45\textwidth}
    \centering
    \includegraphics[width=1.8in,height=1.3in]{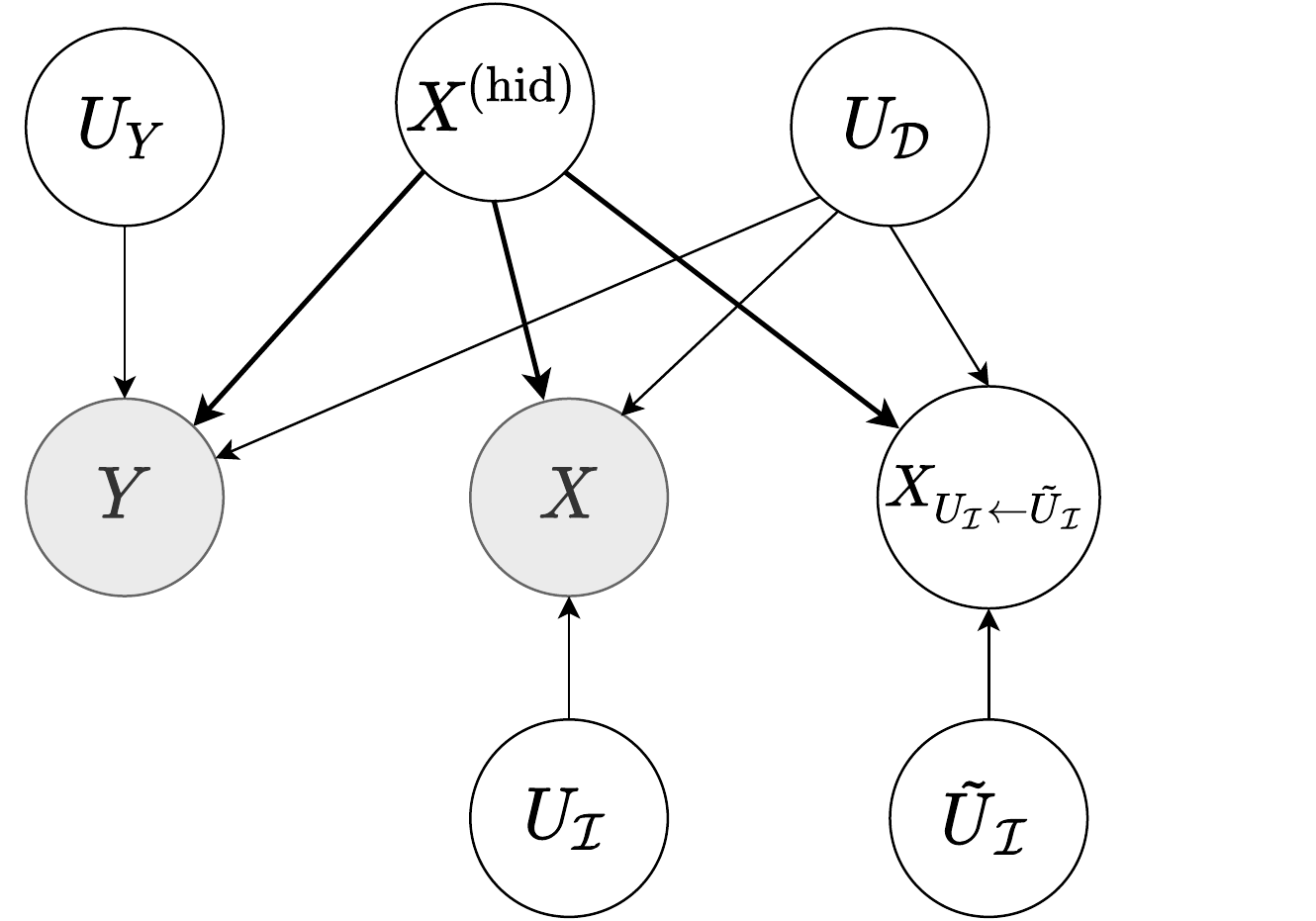}
    \vspace{-5pt}
    \caption{\update{Illustration of our structural causal model (SCM), where gray nodes indicate observed variables (in training). $\obsX$ and $\cfX$ are obtained from $X^\text{(hid)}$ and are coupled by sharing $U_\Sdep$. However, $U_\Sbar$ and $\widetilde{U}_\Sbar$ can have different support, resulting in different distributions over $\obsX$ and $\cfX$.}}
    \label{fig:scm}
\end{wrapfigure}

Geometrically, extrapolation can be thought as reasoning beyond a convex hull of a set of training points \citep{haffner2002escaping,hastie2012elements,xu2020neural}.
However, for neural networks ---with their arbitrary representation mappings--- this geometric interpretation can be insufficient.
Rather, we believe extrapolations are better described through counterfactual reasoning~\citep{neyman1923applications, rubin1974estimating, pearl2009causality, scholkopf2019causality}.
Specifically in our task, we ask: {\em After seeing training data from environment $A$, the learner wants to extrapolate and predict what would have been the output if the training environment were $B$.}
Extrapolations differ from traditional domain adaptation due to its counterfactual nature ---a {\em what-if} question of an intervention that can only be imagined if given offline data~\citep{elias2020PCH,pearl2018ladder}, rather than a known distributional change.

\update{Specifically, our framework follows the independent causal mechanism principle~\citep{scholkopf2019causality,peters2017elements}:  A mechanism describing a variable given its causes is independent of all other mechanisms describing other variables.
For instance, in the causal model $U_X \to X \to Y \leftarrow U_Y$, this implies that the conditional distribution $P(Y|X)$ is not influenced by any change in $P(X)$.}

\vspace{-5pt}
\subsection{Transformation groups}
We focus on extrapolations tied to finite linear automorphism groups acting on the input data.
We start with an example. 
Consider an input $\vx \in \cX = \sR^{3 n^2}$ representing a vectorized $n\times n$ RGB image. 
We can define at least three linear automorphism groups: (1) $\cG_\text{rot} \equiv \{T^{(k)}\}_{k\in \{0^\circ,90^\circ,180^\circ,270^\circ\}}$, which rotates the image by $k$ degrees, (2) $\cG_\text{color} \equiv \{T^{(\alpha)}\}_{\alpha \in \sS_3}$, which permutes the RGB channels of the image, and (3) $\cG_\text{vflip} \equiv \{T^{(v)}, T^{(0)}\}$, which flips the image vertically.
More generally, 
a linear automorphism group $\cG$ satisfies six properties:  
(automorphism) $\forall T \in \cG$, $T:\cX \to \cX$;
(identity) $I(x)=x$, $I \in \cG$; 
(is closed under composition) $\forall T,T' \in \cG$, $T \circ T' \in \cG$, where $T \circ T'(x) = T(T'(x))$; 
(associative) $\forall T,T',T^\dagger \in \cG$, $T \circ (T' \circ T^\dagger) = (T \circ T') \circ T^\dagger$; 
(has inverses) $\forall T \in \cG,$ $\exists T^{-1} \in \cG$ s.t.\ $T^{-1} \circ T = I$; and
(is linear) $T \in \cG$ is a linear function.

Besides images, sequences $\vx=(x_1,x_2,\ldots)$ are another input of interest, where $\vx \in \cX$ for some appropriately defined set $\cX$.
Here, the symmetric group (permutation group) $\sS_n$, is the set of all permutations $\sS_n=\{\pi ~|~ \pi:\{1,\ldots,n\}\rightarrow \{1,\ldots,n\} ~\text{is a bijection}\}$ equipped with the composition operator. 
Attributed graphs $(\tA,\mX) \in \cX$, where $\tA$ is tensor of edge properties and $\mX$ is a matrix of node attributes, are also of interest for the permutation group $\sS_n$.

{\em Subgroups and overgroups.}
Just as we can compose image transformations to make new image transformations, we can also compose automorphism groups into larger automorphism groups (overgroups).
For instance, we can compose rotations and image flips to form a linear automorphism group $\cG_\text{\{rot,vflip\}} = \langle \cG_\text{rot} \cup \cG_\text{vflip} \rangle$ containing all such compositions, where $\langle \cdot \rangle$ is the group join operator.
Following standard notation, we say $\cG_\text{rot} \leq \cG_\text{\{rot,vflip\}}$ to indicate that $\cG_\text{rot}$ is a subgroup of $\cG_\text{\{rot,vflip\}}$, or, equivalently, $\cG_\text{\{rot,vflip\}}$ is an overgroup of $\cG_\text{rot}$.
Henceforth, we use $\cG_{\{1,\ldots,m\}} \equiv \langle \cup_{i=1}^m \cG_i \rangle$ to denote the group generated by the groups $\cG_1,\ldots,\cG_m$.

\vspace{-5pt}
\subsection{The causal mechanism and an economical data generation process} 
We assume that a fundamentally economical process created the training data, where the focus was on sampling diverse environments in a way that mattered to the task.
For instance, image datasets will contain mostly upright pictures, rather than images over all possible orientations, but we will assume the dataset curators strive for a somewhat diverse set of subjects for each label (e.g., a good representation of different types of subjects and environmental conditions). 
{\em Hence, the absence of variation over image orientations in the dataset can be counted as evidence against its effect on the image labels.}

We describe the data generation with the help of a structural causal model (SCM)~\citep[Definition 7.1.1]{pearl2009causality} illustrated in \Cref{fig:scm}.
Consider a supervised task over inputs $\obsX$ and their corresponding outputs $Y$, which are random variables defined over a suitable space.
The hidden random variable
\begin{equation} \label{eq:Xhid}
X^\text{(hid)} := g(\noise_u),
\end{equation}
where $g:\cU \to \cX$ is a measurable map (deterministic function) that
describes the input $\obsX$ in some unknown canonical form, where $\noise_u$ is a random variable (e.g., $\noise_u \sim \text{Uniform}(0,1)$).
Next, we define how $X^\text{(hid)}$ is modified by transformations into the observed input $\obsX$.

{\bf Transformation of $X^\text{(hid)}$ into $\obsX$.}
Consider a collection of finite linear automorphism groups $\cG_1,\ldots,\cG_m$.
Let $\Sbar \subseteq \{1,\ldots,m\}$ be a subset and $\Sdep \subseteq \{1,\ldots,m\} \backslash \Sbar$ be a subset of its complement. \update{We will later define the target variable to be dependent only on the groups indexed by $\Sdep$.}
Consider independent and identically distributed random variables $U_\Sdep$ and $U_\Sbar$ that select transformations in the respective overgroups $\cG_\Sdep = \langle \cup_{j\in \Sdep} \cG_j \rangle$ and $\cG_\Sbar = \langle \cup_{i\in \Sbar} \cG_i \rangle$.
\update{We note in passing that we allow $\cG_\Sdep \cap \cG_\Sbar \neq \{T_\text{identity}\}$ even though  $\cG_\Sdep \cap \cG_\Sbar = \{T_\text{identity}\}$ makes the counterfactual task easier.}
The observed input is defined as
\begin{equation}\label{eq:Xobs}
    \obsX := T_{\noise_\Sdep,\noise_\Sbar} \circ X^\text{(hid)},
\end{equation}
where $T_{\noise_\Sdep,\noise_\Sbar}$ is a transformation in $\overgroup$ indexed by two independent {\em hidden}  environment background random variables $\noise_\Sdep,\noise_\Sbar$. 
\update{The reader can roughly interpret $U_\Sdep$ and $U_\Sbar$ as the random seeds of a random number generator that gives ordered sequences of transformations from $\cG_\Sdep$ and $\cG_\Sbar$ respectively. If these ordered sequences are, say, $T^{(1)}_\Sdep, \ldots, T^{(a)}_\Sdep$ and $T^{(1)}_\Sbar, \ldots, T^{(b)}_\Sbar$, then $T_{\noise_\Sdep,\noise_\Sbar}$ is the transformation obtained after interleaving the two sequences of transformations and composing them in order: $T_{U_\Sdep, U_\Sbar} = T^{(1)}_\Sbar \circ T^{(1)}_\Sdep \circ T^{(2)}_\Sbar \circ \ldots$. }
\update{Note that $T^{(i)}_\Sbar$ or $T^{(i)}_\Sdep$ could be identity transformations.
\cref{appx:indexing} shows that this indexing is surjective, i.e., it can index every transformation in $\overgroup$. 
}

{\bf Target variable.}
The output $Y$ associated with $\obsX$ is given by
\begin{equation}\label{eq:Y}
Y := h(X^\text{(hid)}, \noise_\Sdep,\noise_Y),
\end{equation}
where $h$ is a deterministic function and $\noise_Y$ is an independent random variable.

\update{A distribution over the set of background random variables  $\mathcal{\noise}_\text{all} = \{\noise_u, \noise_Y, \noise_\Sdep, \noise_\Sbar\}$ along with \Cref{eq:Xobs,eq:Y} \textit{induces} a  joint distribution $P(Y, \obsX)$. 
If the support of $U_\Sbar$ is a singleton set $\{c\}$ for some constant $c$, then $(Y, \obsX)$ are said to be sampled using an \textbf{economical data generation process}}. 
In other words, the training data can contain just one value for the variable $U_\Sbar$ since the outputs $Y$ do not depend on $U_\Sbar$. 
For instance, 
if $\cG_\Sbar$ is the rotation group, 
and the image label $Y$ does not depend on image rotation, then the observed images can be all upright since the sampling is economical.
\update{This is not a required condition for our method to work, however.}

{\bf Extrapolation as counterfactual reasoning.} 
We can now ask ``what would have happened to $Y$ if we had given specific values of $\noise_\Sbar$ to the data generation process in \Cref{eq:Xobs,eq:Y} rather than sampling from $P(U_\Sbar)$''. 
For instance, would the class of an image change if we had flipped the image along the vertical axis?
Would we re-classify outlier events if we changed the order of events in a stationary time series?
These are counterfactual queries over environment background variables $\noise_\Sbar$.

\update{We now describe the counterfactual variable in our task via variable coupling~\citep{pitman1976coupling,propp1996exact}, which we believe gives a standard-statistics-friendly description of counterfactual SCMs~\citep{shpitser2007canbetested}. 
The coupling of two independent variables $D_1$ and $D_2$ is a proof technique that creates a random vector $(D_1^\dagger,D_2^\dagger)$, such that $D_i$ and $D_i^\dagger$ have the same marginal distributions, $i=1,2$, but makes $D_1^\dagger$ and $D_2^\dagger$ structurally dependent. 
For instance, consider independent 6-sided and 12-sided dice, denoted $D_1$ and $D_2$ respectively.
Let $D_1^\dagger =  (U + \epsilon_1) \text{ mod } 6 + 1$ and $D_2^\dagger = (U + \epsilon_2) \text{ mod } 12 + 1$, where $U$ is a 12-sided die roll and $\epsilon_1,\epsilon_2 \in \{0,1\}$ are two independent coin flips.
Then, the tuple $(D_1^\dagger,D_2^\dagger)$ has coupled the variables $D_1$ and $D_2$ via the common random variable $U$.}
\begin{definition}[Counterfactual coupling (CFC)] \label{def:coupling}
The counterfactual coupling of the observed data $(Y,X)$ is a vector
$(Y,X,\cfX )$, where 
$Y = h(X^\text{(hid)}, \noise_\Sdep,\noise_Y)$, $\obsX =  T_{\noise_\Sdep,\noise_\Sbar} \circ X^\text{(hid)}$, and $\cfX =  T_{\noise_\Sdep,\widetilde{\noise}_\Sbar} \circ X^\text{(hid)}$, 
for appropriately defined $\noise_u, \noise_\Sdep, \noise_Y, \noise_\Sbar, \widetilde{\noise}_\Sbar$. 
The subscript $\noise_\Sbar \leftarrow \widetilde{\noise}_\Sbar$ denotes the counterfactual variable to $\obsX$ when $\widetilde{\noise}_\Sbar$ replaces $\noise_\Sbar$ in the data generation process.
For a constant $u$, $X_{\noise_\Sbar \leftarrow u}$ gives the same definition as the twin network method of~\cite{balke1994counterfactual}.
\end{definition}

\update{The support of $\widetilde{U}_\Sbar$ in \cref{def:coupling} can be very different from that of $U_\Sbar$, potentially inducing a different distribution over $\cfX$ than $\obsX$ even if the variables $\cfX$ and $\obsX$ are structurally dependent via $U_\Sdep$. }
Armed with \Cref{def:coupling}, we are now ready to describe our task.
\vspace{-5pt}
\subsection{Extrapolation model} 
We start by defining counterfactual G-invariant (\CFCinvariant) representations.
\begin{definition}[\CFCinvariant representations] \label{def:CFCinv}
Let the vector ($\obsX$, $\cfX$) denote the counterfactual coupling of the random variable $\obsX$ given in \cref{def:coupling} for any $\widetilde{U}_\Sbar$.
A representation function $\Gamma:\cX \to \sR^d$, $d \geq 1$, is deemed \CFCinvariant if
\begin{equation} \label{eq:Gamma}
    \Gamma(\obsX) = \Gamma(\cfX) \:, 
\end{equation}
where the equality implies that $\Gamma(\cfXu{u}) = \Gamma(\cfXu{u'}), \forall u \in \text{supp}(\noise_\Sbar), \forall u' \in  \text{supp}(\widetilde{\noise}_\Sbar)$ and $\text{supp}(A)$ is the support of random variable $A$.

\end{definition}

\paragraph{Extrapolated model from training to test data.} 
Let $(Y,X^\text{(tr)}) \sim P(Y,\obsX)$  and $(Y,X^\text{(te)}) \sim P(Y,\cfX)$, for some appropriately defined $\widetilde{U}_\cI \sim P(\widetilde{U}_\cI)$, be the random variables describing the training and test data, respectively.
We do not have access to test data at training time.
Let $\Gamma_\text{true}:\cX \to \sR^d$, $d \geq 1$, be a representation of the input data.
Consider a function $g_\text{true}:\sR^d \to \text{Im}\, P(Y = y|X^\text{(tr)})$ ---where $\text{Im}\, P(\cdot)$ is the image of  $P(\cdot)$--- \update{(e.g., $g_\text{true}$ could be a feedforward network with softmax output)} and
\begin{equation}\label{eq:cf_train}
 Y | X^\text{(tr)} \stackrel{d}{=} \hat{Y} | X^\text{(tr)} , \text{ with }
     \hat{Y} | X^\text{(tr)} \sim  g_\text{true}(\Gamma_\text{true}(X^\text{(tr)})),
\end{equation}
where {\scriptsize $\stackrel{d}{=}$} means the random variables have the same distribution.
Then, \textbf{if} $\Gamma_\text{true}(\obsX) = \Gamma_\text{true}(\cfX)$, \textbf{then}
we have that, by our definition of $X^\text{(te)}$ and $X^\text{(tr)}$, $g_\text{true} \circ \Gamma_\text{true}$ extrapolates:
\begin{equation}\label{eq:cf}
 Y | X^\text{(te)} \stackrel{d}{=} \hat{Y} | X^\text{(te)} , \text{ with }
     \hat{Y} | X^\text{(te)} \sim  g_\text{true}(\Gamma_\text{true}(X^\text{(te)})).
\end{equation}
Alas, learning $\Gamma_\text{true}$ is the real challenge:
(i) We do not know $\Sbar$ (and, hence, we do not know the group $\cG_\Sbar$ which is related to the \CFCinvariance); (ii) this would also require knowing $P(\widetilde{U}_\Sbar)$, which we don't.
Without an observed $\cfX$, the statistical assumption that  {\em examples not explicitly observed with infinitely large training data have \update{underspecified outcomes} in the learner’s statistical model} \update{does not push the model towards} learning $\Gamma_\text{true}$. We must change this assumption.

\vspace{-2pt}
\section{\CFCinvariance{}s for Extrapolation}\label{sec:method}
\vspace{-2pt}
In this section we introduce our learning framework, which seeks to use the training data 
to approximate $\Gamma_\text{true}$ and $g_\text{true}$ of \Cref{eq:cf}.
Our framework regularizes neural network weights towards representations that are invariant to groups that negligibly impact training data accuracy.
We overcome some key challenges: 
{\bf (a)} \Cref{thm:InvCounterfactual} below shows that \CFCinvariances (\Cref{def:CFCinv}) are stronger than G-invariances. 
After that, \Cref{thm:normalsubgroup} defines conditions under which G-invariances suffice as \CFCinvariances, and
{\bf (b)}
We derive an optimization objective where {\em all G-invariances are mandatory, except the ones deemed inconsistent with the training data}, replacing the traditional {\em unseen-is-underspecified} learning hypothesis.

Our first question is whether \CFCinvariance{}s are just G-invariances.
\Cref{thm:InvCounterfactual} shows they are not.

\newcommand{\notimplies}{
  \mathrel{{\ooalign{\hidewidth$\not\phantom{=}$\hidewidth\cr$\implies$}}}}

\begin{restatable}[\CFCinvariance is  stronger than G-invariance]{theorem}{cfcinvthm}
\label{thm:InvCounterfactual}
Let the vector ($\obsX$, $\cfX$) denote the counterfactual coupling of the observed variable $\obsX$ given in \cref{def:coupling}.
For a representation $ \Gamma:\cX \to \sR^d$, $d \geq 1$, let 
\begin{align*}
\text{G-inv}&: ~ \forall T_\Sbar \in \cG_\Sbar,  ~~\Gamma(\obsX) = \Gamma(T_\Sbar \circ \obsX) \:, \\
\text{CG-inv}&: ~ \Gamma(\obsX) = \Gamma(\cfX) \:, 
\end{align*}
denote the conditions on $\Gamma$ for $\cG_\Sbar$-invariance and \CFCinvariance  respectively. Then,
    CG-inv$\implies$G-inv, but G-inv$\notimplies$CG-inv.
\end{restatable}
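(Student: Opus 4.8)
The plan is to prove the two claims separately: the implication CG-inv $\implies$ G-inv by a short constructive argument, and the failure of the converse by an explicit counterexample that exploits non-normality of $\cG_\Sbar$ in $\overgroup$.

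\textbf{Forward direction.} To show CG-inv $\implies$ G-inv, I would fix an arbitrary $T_\Sbar \in \cG_\Sbar$ together with an arbitrary realization of $(X^\text{(hid)}, \noise_\Sdep, \noise_\Sbar)$, so that $\obsX = T_{\noise_\Sdep,\noise_\Sbar} \circ X^\text{(hid)}$. The goal is to realize $T_\Sbar \circ \obsX$ as a counterfactual $\cfXu{u'}$. Using the interleaved factorization $T_{\noise_\Sdep,\noise_\Sbar} = T^{(1)}_\Sbar \circ T^{(1)}_\Sdep \circ T^{(2)}_\Sbar \circ \cdots$ from \Cref{def:coupling}, I keep the $\Sdep$-sequence (fixed by $\noise_\Sdep$) untouched and replace only the leading factor $T^{(1)}_\Sbar$ by $T_\Sbar \circ T^{(1)}_\Sbar$, which again lies in $\cG_\Sbar$ by closure. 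The surjectivity of the indexing (\cref{appx:indexing}) guarantees this modified interleaving is realized by some value $u'$ in the support of an appropriately chosen $\widetilde{\noise}_\Sbar$, so that $T_{\noise_\Sdep,u'} = T_\Sbar \circ T_{\noise_\Sdep,\noise_\Sbar}$ and hence $\cfXu{u'} = T_\Sbar \circ \obsX$. Because \Cref{def:CFCinv} is required to hold for \emph{any} $\widetilde{\noise}_\Sbar$, CG-inv yields $\Gamma(\obsX) = \Gamma(\cfXu{u'}) = \Gamma(T_\Sbar \circ \obsX)$; since $T_\Sbar$ was arbitrary, this is exactly G-inv.

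\textbf{The converse fails.} For the non-implication I would construct the smallest counterexample in which $\cG_\Sbar$ is \emph{not} normal in $\overgroup$. Take $\cX = \sR^3$ with $\overgroup = \sS_3$ acting by coordinate permutation, $\cG_\Sbar = \langle (1\,2) \rangle$ and $\cG_\Sdep = \langle (2\,3) \rangle$. The key observation is that whenever the interleaving schedules an $\Sbar$-transformation \emph{after} a $\Sdep$-transformation, the net map applied on top of $\obsX$ is a $\Sdep$-conjugate of an $\Sbar$-element, and here $(2\,3)(1\,2)(2\,3) = (1\,3) \notin \cG_\Sbar$. Concretely, in the economical setting I take $\noise_\Sbar$ a point mass at identity and $\noise_\Sdep$ realizing $(2\,3)$, giving $\obsX = (2\,3) \circ X^\text{(hid)}$, and a counterfactual seed $\widetilde{\noise}_\Sbar$ that places $(1\,2)$ in the $\Sbar$-slot following $(2\,3)$, so $\cfX = (2\,3) \circ (1\,2) \circ X^\text{(hid)} = (1\,3) \circ \obsX$. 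It then remains to pick $\Gamma$ that is $\cG_\Sbar$-invariant yet separates $\obsX$ from $(1\,3) \circ \obsX$; the projection $\Gamma(x_1,x_2,x_3) = x_3$ does this, since it is unchanged by $(1\,2)$ (so G-inv holds) while $\Gamma((1\,3)\circ\obsX)$ reads the first coordinate of $\obsX$, which differs from the third whenever $X^\text{(hid)}$ has distinct entries. Hence $\Gamma(\obsX) \neq \Gamma(\cfX)$, establishing G-inv $\notimplies$ CG-inv.

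I expect the main obstacle to be the bookkeeping of the interleaving in \Cref{def:coupling} in both directions: in the forward part, showing that the prepended $T_\Sbar$ can always be absorbed while leaving the $\Sdep$-sequence fixed (relying on surjectivity of the indexing), and in the counterexample, verifying that an $\Sbar$-transformation can be scheduled strictly after a $\Sdep$-transformation so that a genuine $\Sdep$-conjugation occurs. Both steps require reading the coupling at the level of the ordered interleaved factorization rather than the marginal group elements; once the conjugate $s\,b\,s^{-1}$ escapes $\cG_\Sbar$, the gap between the two notions is precisely the failure of normality, which foreshadows the normal-subgroup condition isolated in \Cref{thm:normalsubgroup}.
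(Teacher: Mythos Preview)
Your proof is correct. The forward direction is essentially identical to the paper's: both absorb an arbitrary $T_\Sbar$ into the leading $\cG_\Sbar$-factor of the interleaved factorization, invoke surjectivity of the indexing to realize the modified sequence by some $\widetilde u$, and then apply CG-inv.

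The counterexample differs. The paper builds a concrete image task with $\cG_\Sdep = \cG_\text{rot}$ and $\cG_\Sbar = \cG_\text{h-translate}$ acting on a $(2n{+}1)\times(2n{+}1)$ rod image, taking $\Gamma$ to sum the middle row; horizontal-translation invariance is then broken once a rotation is interposed. Your $\sS_3$-on-$\sR^3$ example with $\cG_\Sbar=\langle(1\,2)\rangle$, $\cG_\Sdep=\langle(2\,3)\rangle$, and $\Gamma(x)=x_3$ is a strictly smaller and cleaner instance of the same phenomenon: you make the non-normality of $\cG_\Sbar$ in $\overgroup$ explicit via the conjugate $(2\,3)(1\,2)(2\,3)=(1\,3)\notin\cG_\Sbar$, so the failure mechanism and the link to \Cref{thm:normalsubgroup} are transparent. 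The paper's example, by contrast, has pedagogical value in showing the gap arises in a realistic image setting (and is reused in \Cref{appx:InvCounterfactual}). One minor wording point: when you say the two values ``differ whenever $X^\text{(hid)}$ has distinct entries,'' the precise requirement is that the first and third coordinates of $\obsX$ differ, i.e.\ $x_1\neq x_2$; distinct entries in $X^\text{(hid)}$ is sufficient but slightly stronger than needed.
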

\update{The proof in \cref{appx:proofs_thm12} constructs a task over images and a representation $\Gamma$ that is $\cG_\cI$-invariant but is not CG-invariant (for appropriately chosen $\cG_\Sbar$ and $\cG_\Sdep$)}. 
The following condition ensures that a $\cG_\Sbar$-invariance is also a \CFCinvariance.
\begin{restatable}{theorem}{normalsubgroupthm}
\label{thm:normalsubgroup}
If $\cG_\Sbar$ is a normal subgroup of $\overgroup$, then $\text{CG-inv} \iff \text{G-inv}$.
\end{restatable}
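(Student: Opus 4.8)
The direction $\text{CG-inv} \implies \text{G-inv}$ holds with no assumption on $\cG_\Sbar$ and is already established by \Cref{thm:InvCounterfactual}, so my plan is to prove only the new direction $\text{G-inv} \implies \text{CG-inv}$ under the normality hypothesis. The whole argument is carried out pointwise on a single realization of the background variables $(\noise_u,\noise_\Sdep,\noise_\Sbar,\widetilde{\noise}_\Sbar)$ in the counterfactual coupling of \Cref{def:coupling}; since it holds for every realization, it holds for the coupled random vector, which is exactly what \Cref{def:CFCinv} asks.

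Write $T_\text{obs} := T_{\noise_\Sdep,\noise_\Sbar}$ and $T_\text{cf} := T_{\noise_\Sdep,\widetilde{\noise}_\Sbar}$, so that $\obsX = T_\text{obs}\circ X^\text{(hid)}$ and $\cfX = T_\text{cf}\circ X^\text{(hid)}$. The central claim I would isolate is that $T_\text{cf}\circ T_\text{obs}^{-1}\in\cG_\Sbar$. Granting it, set $T_\Sbar' := T_\text{cf}\circ T_\text{obs}^{-1}\in\cG_\Sbar$, so that $\cfX = T_\text{cf}\circ X^\text{(hid)} = T_\Sbar'\circ T_\text{obs}\circ X^\text{(hid)} = T_\Sbar'\circ\obsX$, and then G-inv gives $\Gamma(\cfX) = \Gamma(T_\Sbar'\circ\obsX) = \Gamma(\obsX)$, which is CG-inv.

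To establish the claim I would use normality of $\cG_\Sbar$ in $\overgroup$ to pass to the quotient group $\overgroup/\cG_\Sbar$ with its canonical projection homomorphism $\phi:\overgroup\to\overgroup/\cG_\Sbar$, whose kernel is exactly $\cG_\Sbar$. Recall from the data-generating process that $T_\text{obs}$ and $T_\text{cf}$ are interleaved compositions of a $\cG_\Sdep$-sequence and a $\cG_\Sbar$-sequence, and that the coupling shares $\noise_\Sdep$: hence both carry the \emph{same} ordered sequence of $\cG_\Sdep$-factors $d_1,\ldots,d_a$ and differ only in the interspersed $\cG_\Sbar$-factors. Applying $\phi$ collapses every $\cG_\Sbar$-factor to the identity coset, so $\phi(T_\text{obs}) = \phi(d_1)\cdots\phi(d_a) = \phi(T_\text{cf})$; the interleaving positions are irrelevant because they only move factors that $\phi$ annihilates, leaving the relative order of the $d_i$ untouched. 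Therefore $\phi(T_\text{cf}\circ T_\text{obs}^{-1}) = e$, i.e.\ $T_\text{cf}\circ T_\text{obs}^{-1}\in\ker\phi = \cG_\Sbar$, as claimed.

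The step I expect to demand the most care is the interleaving bookkeeping above: one must verify that the shared $\noise_\Sdep$ genuinely forces identical $\cG_\Sdep$-subsequences in the same relative order even though $\noise_\Sbar$ and $\widetilde{\noise}_\Sbar$ may produce $\cG_\Sbar$-sequences of different lengths and hence different interleaving patterns. An equivalent, more hands-on route that avoids the quotient is to invoke normality directly as $d^{-1}\circ b\circ d\in\cG_\Sbar$ for all $d\in\overgroup,\,b\in\cG_\Sbar$, and repeatedly conjugate to push every $\cG_\Sbar$-factor to the left; this rewrites $T_\text{obs} = \beta_\text{obs}\circ D$ and $T_\text{cf} = \beta_\text{cf}\circ D$ with the common product $D := d_1\circ\cdots\circ d_a$ and $\beta_\text{obs},\beta_\text{cf}\in\cG_\Sbar$, whence $T_\text{cf}\circ T_\text{obs}^{-1} = \beta_\text{cf}\circ\beta_\text{obs}^{-1}\in\cG_\Sbar$. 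Either way, normality is precisely the ingredient that makes the $\cG_\Sbar$-factors separable from the shared $\cG_\Sdep$-backbone, which is why the equivalence is expected to fail without it (consistent with \Cref{thm:InvCounterfactual}).
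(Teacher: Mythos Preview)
Your proposal is correct. Your ``hands-on'' alternative is essentially the paper's argument: the paper also uses normality in the form $T_\Sdep\circ T_\Sbar = T'_\Sbar\circ T_\Sdep$ to slide every $\cG_\Sbar$-factor to the left, obtaining $T_{\noise_\Sdep,\noise_\Sbar}=T^\dagger_\Sbar\circ T^{(1)}_\Sdep\circ T^{(2)}_\Sdep\circ\cdots$, and then applies $\Gamma$ with G-inv to reduce both $\Gamma(\obsX)$ and $\Gamma(\cfX)$ to the common value $\Gamma(T^{(1)}_\Sdep\circ\cdots\circ X^\text{(hid)})$; it handles the length-one $\cG_\Sdep$-sequence first and then remarks that longer interleavings follow by repeated application. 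Your quotient-group formulation is a cleaner packaging of the same idea: passing to $\overgroup/\cG_\Sbar$ makes the ``slide the $\cG_\Sbar$-factors past the $\cG_\Sdep$-factors'' step automatic and dispenses with the induction on sequence length, at the cost of assuming the reader is comfortable with quotient groups. Your worry about interleaving patterns is mostly moot here because the paper's indexing uses a fixed strict alternation (with identity padding), so the shared $\noise_\Sdep$ does yield identical $\cG_\Sdep$-subsequences in the same relative positions; but your quotient argument is robust to that detail anyway.
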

A subgroup $H$ of a group $G$ is called normal (denoted $H \trianglelefteq G$) if for all $h \in H$ and $g\in G$,  $ghg^{-1} \in H$. \update{Proof in the \cref{appx:proofs_thm12} utilizes the fact that if $\cG_\Sbar \trianglelefteq \overgroup$, then any $T \in \overgroup$ can be written as $T = T_\Sbar \circ T_\Sdep$ for some $T_\Sbar \in \cG_\Sbar, T_\Sdep \in \cG_\Sdep$.} 
Throughout the rest of the paper, we will assume that $\cG_\Sbar$ is a normal subgroup of $\overgroup$ in the SCM \Cref{eq:Xobs}.

\subsection{Constructing subspaces of $\text{vec}(\cX)$ partially ordered by invariance strength}
As discussed before, we do not know $\cG_\Sbar$. In this subsection, we build neural network weights that are invariant to $\cG_M$ for different subsets $M \subseteq \{1,\ldots,m\}$.
A detailed step-by-step example of this construction for $3\times 3$ images is shown in \cref{appx:example}. 
We start by restating the Reynolds operator, which has been extensively used in the literature of G-invariant representations without attribution:
\begin{restatable}[Reynolds operator~(\citet{mumford1994geometric}, Definition 1.5)]{lemma}{Tbarlemma} \label{lem:Tbar}
Let $\cG$ be a (finite) linear automorphism group over $\text{vec}(\cX)$. Then,
\begin{equation}\label{eq:Tbar}
\widebar{T} = \frac{1}{|\cG|} \sum_{T \in \cG} T 
\end{equation}
is a $\cG$-invariant linear automorphism, i.e., $\forall T_\dagger \in \cG$ and $\forall \vx \in \text{vec}(\cX)$, it must be that  $\widebar{T}(T_\dagger \vx) = \widebar{T} \vx$.
\end{restatable}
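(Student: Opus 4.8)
The plan is to prove the invariance identity $\widebar{T}(T_\dagger \vx) = \widebar{T}\vx$ directly, reducing it to a single operator equation and then invoking the rearrangement property of finite groups. Since every $T \in \cG$ is linear and $\widebar{T}$ is a finite average of such maps, $\widebar{T}$ is itself linear; hence the claimed pointwise identity for all $\vx \in \text{vec}(\cX)$ is equivalent to the operator equation $\widebar{T} \circ T_\dagger = \widebar{T}$. I would establish this operator equation first and then read off the pointwise statement by applying both sides to an arbitrary $\vx$.

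To establish it, I would expand, using linearity of composition over the finite sum, $\widebar{T} \circ T_\dagger = \frac{1}{|\cG|} \sum_{T \in \cG} (T \circ T_\dagger)$. The key step is the observation that right-composition with the fixed element $T_\dagger \in \cG$ is a bijection of $\cG$ onto itself: closure under composition guarantees $T \circ T_\dagger \in \cG$, while the existence of inverses guarantees that $T \mapsto T \circ T_\dagger$ has inverse $T' \mapsto T' \circ T_\dagger^{-1}$, so it is a permutation of the finite index set $\cG$. Re-indexing the sum by $T' = T \circ T_\dagger$ therefore leaves it unchanged, $\sum_{T \in \cG}(T \circ T_\dagger) = \sum_{T' \in \cG} T'$, and dividing by $|\cG|$ yields $\widebar{T} \circ T_\dagger = \widebar{T}$, which is exactly the desired $\cG$-invariance.

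To round out the statement I would note that $\widebar{T}$ maps $\text{vec}(\cX)$ to itself and is linear by construction, since each summand is a linear self-map of the space. As a byproduct of the invariance identity one obtains idempotency: averaging $\widebar{T} \circ T_\dagger = \widebar{T}$ over all $T_\dagger \in \cG$ gives $\widebar{T} \circ \widebar{T} = \widebar{T}$, so $\widebar{T}$ is in fact the projection onto the $\cG$-invariant subspace of $\text{vec}(\cX)$, consistent with its standard description as the Reynolds operator.

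I do not anticipate a genuinely hard step: the only point requiring care is the re-indexing argument, which must explicitly use both closure and the existence of inverses to certify that right-composition by $T_\dagger$ permutes $\cG$ --- without both axioms the sum would not be guaranteed to reindex onto itself. Everything else (linearity of $\widebar{T}$ and interchanging composition with the finite sum) is routine and can be stated without computation.
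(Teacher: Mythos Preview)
Your proposal is correct and follows essentially the same route as the paper: reduce to the operator identity $\widebar{T}\circ T_\dagger=\widebar{T}$, expand the average, and use that right-composition by $T_\dagger$ permutes the finite group $\cG$ (the paper phrases this as defining $\cG_\dagger=\{T\circ T_\dagger:T\in\cG\}$ and showing $\cG_\dagger=\cG$ via closure plus injectivity). Your additional remark on idempotency is also in line with the paper, which notes $\widebar{T}^2=\widebar{T}$ immediately after the lemma.
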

Since $\widebar{T}$ is a projection operator (i.e., $\widebar{T}^2 = \widebar{T}$), all the eigenvalues of $\widebar{T}$ are either 0 or 1. Using this fact, we now describe G-invariant neurons using the left eigenspace of $\widebar{T}$ corresponding to the eigenvalue 1. 
\begin{restatable}{lemma}{invNeuron} \label{lem:invNeuron}
If $\cW$ denotes the left eigenspace corresponding to the eigenvalue 1 of the Reynolds operator $\widebar{T}$ for the group $\cG$, {\bf then} $\forall b \in \sR$, the linear transformation $\gamma(\vx; \vw, b) = \vw^T \vx + b$ is invariant to all transformations $T \in \cG$, i.e., $\gamma(T \vx; \vw, b) = \gamma(\vx; \vw, b)$,
{\bf if and only if} $\vw \in \cW$.
\end{restatable}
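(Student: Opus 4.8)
The plan is to first strip away the affine constant and reduce $\cG$-invariance of $\gamma$ to a purely linear condition on $\vw$, and then match that condition to membership in $\cW$ by exploiting the averaging structure of the Reynolds operator. Since $\gamma(T\vx;\vw,b) - \gamma(\vx;\vw,b) = \vw^T(T\vx - \vx)$, the bias $b$ plays no role, and invariance for every $\vx \in \text{vec}(\cX)$ is equivalent to the statement that $\vw^T T = \vw^T$ for all $T \in \cG$; call this condition $(\star)$. It is also convenient to record that $\cW$, the left eigenspace of $\widebar{T}$ for eigenvalue $1$, is exactly $\{\vw : \vw^T \widebar{T} = \vw^T\} = \{\vw : \widebar{T}^T \vw = \vw\}$.

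For the forward implication ($(\star) \Rightarrow \vw \in \cW$) I would simply average: if $\vw^T T = \vw^T$ for every $T$, then $\vw^T \widebar{T} = \frac{1}{|\cG|}\sum_{T \in \cG} \vw^T T = \frac{1}{|\cG|}\sum_{T \in \cG} \vw^T = \vw^T$, so $\vw \in \cW$. This direction is immediate and uses nothing beyond the definition of $\widebar{T}$.

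The backward implication ($\vw \in \cW \Rightarrow (\star)$) is the crux, since a priori the single averaged identity $\vw^T \widebar{T} = \vw^T$ is weaker than the whole family of per-transformation identities in $(\star)$. The key tool is the right-invariance of the Reynolds operator supplied by \Cref{lem:Tbar}: for every $T_\dagger \in \cG$, $\widebar{T}(T_\dagger \vx) = \widebar{T}\vx$, i.e. $\widebar{T} T_\dagger = \widebar{T}$ as operators; transposing gives $T_\dagger^T \widebar{T}^T = \widebar{T}^T$. Then for $\vw \in \cW$ I would compute $T_\dagger^T \vw = T_\dagger^T \widebar{T}^T \vw = (\widebar{T} T_\dagger)^T \vw = \widebar{T}^T \vw = \vw$, which is precisely $(\star)$ for $T_\dagger$, so invariance holds transformation by transformation.

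The main obstacle is exactly this backward step: recovering invariance under each individual $T$ from the lumped condition encoded by $\widebar{T}$. The resolution is to transport the group-averaging symmetry off of $\widebar{T}$ and onto $\vw$ through the transpose of the right-invariance identity, so that the fixed vectors of $\widebar{T}^T$ coincide with the common fixed vectors of $\{T^T : T \in \cG\}$. (Equivalently, one can note that $\cW$ is the eigenvalue-$1$ eigenspace of the Reynolds operator of the transposed group $\{T^T\}$, and that for any finite group the eigenvalue-$1$ eigenspace of its Reynolds operator equals its space of common fixed points.) Notably, no appeal to the projection property $\widebar{T}^2 = \widebar{T}$ is needed beyond \Cref{lem:Tbar}, so the argument stays short.
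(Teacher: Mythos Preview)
Your proposal is correct and follows essentially the same route as the paper. Both directions match: for necessity you average $\vw^T T = \vw^T$ over $\cG$ exactly as the paper does; for sufficiency you insert the right-invariance $\widebar{T}T_\dagger = \widebar{T}$ from \Cref{lem:Tbar} between $\vw^T$ and $T_\dagger$, which is precisely the paper's computation $(\vw')^T T\vx = (\vw')^T \widebar{T} T\vx = (\vw')^T \widebar{T}\vx = (\vw')^T\vx$, just rewritten via transposes. The only cosmetic difference is that the paper first expands $\vw'$ in an eigenbasis of $\cW$ before invoking $(\vw')^T\widebar{T} = (\vw')^T$, a step your version correctly recognizes as unnecessary by linearity.
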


The above property of the Reynolds operator can be leveraged to build neural networks that adhere to particular group symmetries, as done by~\cite{yarotsky2018} and~\cite{van2020mdp}. If we knew $\cG_\Sbar$, restricting the parameters of each neuron to the left 1-eigenspace of the Reynolds operator of $\cG_\Sbar$ would give us a way to build a $\cG_\Sbar$-invariant neural network. 

Alas, we do not know $\Sbar$, and consequently we do not know $\cG_\Sbar$. Instead, we want to construct bases for the complete space $\text{vec}(\cX)$ such that they are partially ordered by their invariance strength: From most invariant bases to least. \update{In other words, we construct bases for subspaces $\mathcal{B}_M$ for $M \subseteq \{1,\ldots,m\}$ such that any weight vector $\mathbf{w} \in \mathcal{B}_M$ is \textbf{(a)} invariant to the groups $\mathcal{G}_i$ for $i\in M$, and \textbf{(b)} not invariant to any group $\cG_j$ for $j \in \{1,\ldots,m\}\setminus M$.}
Later, we will use this partial order to define a regularization term for our method.
\Cref{lem:partition} shows how these bases can be constructed inductively, where we start with the most invariant subspace (when $M = \{1,\ldots,m\}$) and judiciously work our way over increasingly less invariant subspaces. 
A reader more interested in the algorithm can first refer to the pseudocode in \cref{appx:pseudocode} or the example in \cref{appx:example} (Step 2).
\begin{restatable}[G-invariant subspace bases can be partially ordered by invariance strength]{theorem}{partitionlemma} \label{lem:partition}
Let $\cW_i \subseteq \text{vec}(\cX)$ be the left eigenspace corresponding to the eigenvalue 1 of the Reynolds operator $\widebar{T}_i$ for group $\cG_i$, $i=1,\ldots,m$. We construct the {invariant} subspace partitions  
\vspace{-3pt}
\begin{align}\label{eq:intersection}
    \widetilde{\mathcal{B}}_M =  \bigcap_{i\in M} \cW_i \:; \quad
    \mathcal{B}_M = \text{orth}_{\mathcal{B}_{\supsetneq M}}(\widetilde{\mathcal{B}}_M) \:, 
    \quad \forall M\in \wp(\{1,\ldots,m\})\setminus \emptyset,
\end{align}
\vspace{-10pt}
~\\
where $\wp$ is the power set, $\mathcal{B}_{\supsetneq M} = \bigoplus_{N \supsetneq M} \mathcal{B}_N$, $\text{orth}_{\mathcal{A}_1}(\mathcal{A}_2)$ removes from the subspace $\mathcal{A}_2$ its orthogonal projection onto the subspace $\mathcal{A}_1$, and $\bigoplus$ is the direct sum operator. 
{\bf Then}, the linear transformation $\gamma(\vx; \vw, b) = \vw^T \vx + b,~ b\in\sR$, $\forall \vw \in \mathcal{B}_M \setminus \{\vzero\}$, is $\cG_M$-invariant but not $\cG_j$-invariant  $\forall j \in \{1,\ldots,m\} \setminus M$. 
\end{restatable}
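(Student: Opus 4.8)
The plan is to reduce everything to \Cref{lem:invNeuron}, which says $\gamma(\cdot;\vw,b)$ is $\cG$-invariant iff $\vw$ lies in $\cW$, the left $1$-eigenspace of the Reynolds operator, and then to run an induction over the subset lattice ordered by decreasing $|M|$ that mirrors the inductive definition in \Cref{eq:intersection}. The first preliminary step is to identify the invariant subspace of the \emph{overgroup} $\cG_M$ with the intersection $\widetilde{\mathcal{B}}_M=\bigcap_{i\in M}\cW_i$. Writing $\cW_M$ for the left $1$-eigenspace of the Reynolds operator of $\cG_M$, \Cref{lem:invNeuron} gives $\vw\in\cW_M$ iff $\vw^{\mathsf T}T=\vw^{\mathsf T}$ for every $T\in\cG_M$. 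Since $\cG_M=\langle\cup_{i\in M}\cG_i\rangle$ is generated by the $\cG_i$, and since $\vw^{\mathsf T}T=\vw^{\mathsf T}$ is preserved under composition and inversion, a vector is fixed by all of $\cG_M$ exactly when it is fixed by each generator, i.e. exactly when $\vw\in\cW_i$ for all $i\in M$. Hence $\cW_M=\bigcap_{i\in M}\cW_i=\widetilde{\mathcal{B}}_M$.

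The core is an induction on decreasing $|M|$ establishing (P1) $\mathcal{B}_M\subseteq\widetilde{\mathcal{B}}_M$; en route the orthogonal decomposition also yields (P2) $\widetilde{\mathcal{B}}_M=\sum_{N\supseteq M}\mathcal{B}_N$ at each level. In the base case $M=\{1,\dots,m\}$ the aggregate $\mathcal{B}_{\supsetneq M}$ is the empty sum, so $\mathcal{B}_M=\widetilde{\mathcal{B}}_M$ and both claims are immediate. For the inductive step I would first record the monotonicity $\widetilde{\mathcal{B}}_N\subseteq\widetilde{\mathcal{B}}_M$ whenever $N\supseteq M$ (an intersection over a larger index set is smaller); combined with the inductive hypothesis (P1) for each $N\supsetneq M$, this gives $\mathcal{B}_{\supsetneq M}=\sum_{N\supsetneq M}\mathcal{B}_N\subseteq\widetilde{\mathcal{B}}_M$. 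Because $\mathcal{B}_{\supsetneq M}$ sits inside $\widetilde{\mathcal{B}}_M$, the operator $\text{orth}_{\mathcal{B}_{\supsetneq M}}(\cdot)$ returns the orthogonal complement of $\mathcal{B}_{\supsetneq M}$ within $\widetilde{\mathcal{B}}_M$, so $\mathcal{B}_M\subseteq\widetilde{\mathcal{B}}_M$ (giving (P1)), $\widetilde{\mathcal{B}}_M=\mathcal{B}_M\oplus\mathcal{B}_{\supsetneq M}=\sum_{N\supseteq M}\mathcal{B}_N$ (giving (P2)), and crucially $\mathcal{B}_M\perp\mathcal{B}_{\supsetneq M}$.

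With these facts the two conclusions follow directly. For $\cG_M$-invariance, any $\vw\in\mathcal{B}_M$ lies in $\widetilde{\mathcal{B}}_M=\cW_M$ by (P1) and the preliminary step, so \Cref{lem:invNeuron} applied to $\cG_M$ shows $\gamma(\cdot;\vw,b)$ is $\cG_M$-invariant. For non-invariance, fix $j\in\{1,\dots,m\}\setminus M$ and $\vw\in\mathcal{B}_M\setminus\{\vzero\}$, and suppose for contradiction that $\gamma(\cdot;\vw,b)$ were $\cG_j$-invariant, i.e. $\vw\in\cW_j$. Then $\vw\in\widetilde{\mathcal{B}}_M\cap\cW_j=\bigcap_{i\in M\cup\{j\}}\cW_i=\widetilde{\mathcal{B}}_{M\cup\{j\}}$; since $j\notin M$ we have $M\cup\{j\}\supsetneq M$, and (P2) applied to $M\cup\{j\}$ yields $\widetilde{\mathcal{B}}_{M\cup\{j\}}=\sum_{N\supseteq M\cup\{j\}}\mathcal{B}_N\subseteq\sum_{N\supsetneq M}\mathcal{B}_N=\mathcal{B}_{\supsetneq M}$. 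Thus $\vw\in\mathcal{B}_{\supsetneq M}$, while $\vw\in\mathcal{B}_M\perp\mathcal{B}_{\supsetneq M}$ forces $\vw^{\mathsf T}\vw=0$, contradicting $\vw\neq\vzero$.

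The main obstacle is the bookkeeping of the induction rather than any single hard idea: one must verify that the orthogonalization in \Cref{eq:intersection} really stays inside $\widetilde{\mathcal{B}}_M$ (which needs $\mathcal{B}_{\supsetneq M}\subseteq\widetilde{\mathcal{B}}_M$, resting on the monotonicity of the intersections together with (P1)), and that the spanning identity $\widetilde{\mathcal{B}}_M=\sum_{N\supseteq M}\mathcal{B}_N$ is maintained at every level so that $\widetilde{\mathcal{B}}_{M\cup\{j\}}$ can be absorbed into $\mathcal{B}_{\supsetneq M}$. I would emphasize that the non-invariance argument only uses the orthogonality of $\mathcal{B}_M$ to the \emph{aggregate} $\mathcal{B}_{\supsetneq M}$, which is immediate from the definition of $\text{orth}$, and never requires $\mathcal{B}_{N_1}\perp\mathcal{B}_{N_2}$ for incomparable $N_1,N_2$ (a property that can in fact fail); routing the contradiction through $\mathcal{B}_{\supsetneq M}$ as a whole is what keeps the argument clean.
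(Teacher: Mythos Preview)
Your proof is correct and follows essentially the same approach as the paper: both run an induction down the subset lattice, identify $\widetilde{\mathcal{B}}_M$ with the $\cG_M$-invariant subspace via \Cref{lem:invNeuron}, use $\mathcal{B}_{\supsetneq M}\subseteq\widetilde{\mathcal{B}}_M$ to get $\mathcal{B}_M\subseteq\widetilde{\mathcal{B}}_M$, and derive non-invariance from $\mathcal{B}_M\perp\mathcal{B}_{\supsetneq M}$ together with $\widetilde{\mathcal{B}}_{M\cup\{j\}}\subseteq\mathcal{B}_{\supsetneq M}$. The only cosmetic difference is that the paper phrases its inductive invariant as $\mathcal{B}_{\supsetneq M}=\bigoplus_{N\supsetneq M}\widetilde{\mathcal{B}}_N$ whereas you carry the equivalent spanning identity $\widetilde{\mathcal{B}}_M=\sum_{N\supseteq M}\mathcal{B}_N$; your use of ordinary sums and your remark that pairwise orthogonality for incomparable $N_1,N_2$ is neither needed nor guaranteed are both well taken.
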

\update{The proof in \cref{appx:proofs_thm3} shows that $\widetilde{\cB}_M$ contains all the vectors $\vw$ that are invariant to $\cG_M$ but could also contain vectors that are invariant to some overgroup of $\cG_M$. Thus, each step of our inductive method performs a Gram-Schmidt orthogonalization in order to satisfy condition \textbf{(b)} above: we need to remove from $\widetilde{\mathcal{B}}_M$ all weight vectors that are invariant to more groups in addition to those indexed by $M$ (i.e., supersets of $M$).}
In addition, if needed, we obtain the basis for the rest of the space through $\cB_\emptyset = \text{orth}_{\cB_{\supsetneq \emptyset}}(\text{vec}(\cX))$, the orthogonal complement of $\cB_{\supsetneq \emptyset}$.

Note that if $\vw \in \cB_N$, then $\vw$ is never $\cG_H$-invariant for $H \supsetneq N$ as we remove all such $\vw$ from $\cB_N$.
Hence, the partial order of nested subsets in $\wp(\{1,\ldots,m\})$ induces a partial order of invariance strengths in the bases of the input domain $\text{vec}(\cX)$ (see \cref{fig:basis_partialorder} for an example). We define level of invariance (or invariance strength) of a subspace $\cB_M$ as the size of $M$ (i.e., $\vert M \vert$).

{{\em Practical aspects}.} 
Our algorithm should output $d_\cX = \text{dim}(\text{vec}(\cX))$ basis vectors covering the entire space (i.e., our new neuron, described later in \Cref{eq:neuron}, still has $d_\cX + 1$ parameters as the original one). Thus we stop the algorithm in \Cref{lem:partition} once $d_\cX$ basis vectors are found.
Moreover, the algorithm needs to run only once for groups $\cG_1,\ldots,\cG_m$, and the results can be reused for other neural architectures.
While the worst-case runtime of finding the bases could be exponential in $m$, it is unclear whether this exponential runtime can actually happen in practice (all of our experimental runtimes take less than one minute in commodity machines).

\vspace{-5pt}
\subsection{Learning \CFCinvariant representations without knowledge of $\cG_\Sbar$.} \label{subsec:learning_framework}
\vspace{-5pt}
We are now ready to learn a \CFCinvariant representation using neural networks $\Gamma$ and $g$. 
Let $\cG_1,\ldots,\cG_m$ be known linear automorphism groups.
Under the assumption of \Cref{thm:normalsubgroup}, we just need $\Gamma$ to be $\cG_\Sbar$-invariant, with $\cG_\Sbar \equiv \langle \cup_{i \in \Sbar} \cG_i \rangle$, but $\Sbar \subseteq \{1,\ldots,m\}$ is unknown to us. 
We achieve the correct $\cG_\Sbar$-invariance by redefining the neuron weights of $\Gamma$ using the subspaces of \Cref{lem:partition} and proposing a regularized objective that 
pushes $\Gamma$ towards the strongest overgroup G-invariance that does not {\em significantly} hurt the training data, where {\em significantly} is controlled by a regularization strength $\lambda > 0$.

More formally, let $\Gamma:\text{vec}(\cX) \times \sR^{d_\cX \times H} \times \sR^H \to \sR^d$, $H\geq 1,d \geq 1$, be a neural network layer with $H$ neurons, parameterized by free parameters $\bOmega \in \sR^{d_\cX \times H}$ and $\vb \in \sR^H$.
The $H$ neurons are arranged in an appropriate architecture as described in \Cref{sec:architectures}, but reader can imagine a feedforward layer for now.
Let $g:\sR^d \to \text{Im} P(Y|X)$ be a link function.
The training data $\cD^\text{(tr)} = \{(y_i^\text{(tr)}, \vx_i^\text{(tr)})\}_{i=1}^N$ is assumed to be sampled according to the SCM data generation process in \Cref{eq:Xhid,eq:Xobs,eq:Y}, with the hidden $\cG_\Sbar$ satisfying the conditions in \Cref{thm:normalsubgroup}.

Let $\mB_M \in \sR^{d_\cX \times d_M}$ be a matrix whose columns are the orthogonal basis of subspace $\cB_M \neq \{\vzero\}$ (from \Cref{lem:partition}) with dimension $d_M$.
\update{Any vector $\vw \in \cB_M$ can be expressed as a linear combination of these basis columns. The coefficients of the linear combination form our learnable parameters. 
These neuron weights $\bOmega$ have a correspondence to the {nonzero} subspace bases $\mB_{M_1},\mB_{M_2},\ldots,\mB_{M_B}$:}
\begin{equation} \label{eq:Omega}
\bOmega = 
\left[\begin{smallmatrix}
\bomega_{M_1,1} & \cdots & \bomega_{M_1,H}\\
\vphantom{X^2} \smash{\cdots}          & \cdots & \smash{\vdots}\\
\bomega_{M_B,1} & \cdots & \bomega_{M_B,H}
\end{smallmatrix}\right],\quad \text{where $B \leq d_\cX$,}
\end{equation}
and $\bomega_{M_i,h} \in \sR^{d_{M_i} \times 1}$ \update{represents the learnable parameters for the subspace $\cB_{M_i}$ and the $h$-th neuron.}
The $h$-th neuron in $\Gamma$, $h \in \{1,\ldots,H\}$, has the form
\begin{align}\label{eq:neuron}
&\Gamma^{(h)}(\vx) = \sigma\bigg(\vx^{\!\top} \bigg( \sum_{i=1}^B \mB_{M_i} \bomega_{M_i,h} \bigg)  + \evb_h\bigg),
\end{align}
 $\sigma(\cdot)$ is a nonpolynomial activation function, and $\evb_h \in \sR$ is a bias parameter.
Our optimization objective is then
\begin{equation} 
\label{eq:constrained}
   \widehat{\bOmega}, \widehat{\vb} ,\widehat{\mW}_g = \argmin_{\bOmega,\vb,\mW_g} \sum_{(y^\text{(tr)}, \vx^\text{(tr)}) \in \cD^\text{(tr)}} \cL \left(y^\text{(tr)},g( \Gamma( \vx^\text{(tr)}; \bOmega,\vb) ; \mW_g) \right) +  \lambda R(\bOmega)
\end{equation}
where $\cL:\cY \times \text{Im} P(Y|X) \to \sR_{\geq 0}$ is a nonnegative loss function, and $\lambda > 0$ is a regularization strength.
The regularization penalty $R(\bOmega)$ is given by,
\begin{align}\label{eq:penalty}
    R(\bOmega) =  \vert \{M_i:|M_i|>l, ~1\leq i \leq B\} \vert ~+~ \sum_{\substack{i:|M_i|=l \\ 1\leq i \leq B}} \vone\{\Vert \bomega_{M_i,\cdot} \Vert^2_2 > 0\} \:,
\end{align} 
where $l = \min\{|M_i| \cdot \vone\{\Vert \bomega_{M_i,\cdot} \Vert^2_2 > 0\}, ~~1 \leq i \leq B\}$. 

{\em Intuition behind the penalty in \cref{eq:penalty}}:
A subspace $\cB_{M_i}$ is said to be \textit{used} in the computation of neuron $h$ (\cref{eq:neuron}) if the corresponding parameter $\bomega_{M_i, h}$ is nonzero. Then, let $\cB_{M_k}$ be the least invariant subspace used by any neuron (i.e., $|M_k|$ is the lowest among all used subspaces) and $\vert M_k \vert = l$. 
The \textbf{first term} in the penalty counts the number of subspaces $\cB_{M_i}$ (used or unused) that are invariant to more groups than $\cB_{M_k}$ (i.e., $\vert M_i \vert > \vert M_k \vert$). This term ensures that the optimization tries to use subspaces that are higher in the partial order with invariance to more groups. 
The \textbf{second term} in the penalty counts the number of subspaces $\cB_{M_i}$ that have the same level of invariance as $\cB_{M_k}$ (i.e., $\vert M_i \vert = \vert M_k \vert$), and also have the corresponding coefficients $\bomega_{M_i, h}$ nonzero (i.e., the subspace $\cB_{M_i}$ is used). The larger the second term, farther away the optimization is from increasing the least level of invariance from $l$ to $l+1$. We present a differentiable approximation of the penalty in \cref{appx:penalty} along with an example computation of \cref{eq:penalty} in \cref{fig:penalty}.

{\em Limitations of \cref{eq:penalty}}:
Recall that we stop the algorithm in \cref{lem:partition} once the basis for $\text{vec}(\cX)$ is found.
In such cases, there could be parameters $\bOmega'$ and $\bOmega''$ that assign positive weights corresponding to the same subspace bases, but with $\bOmega'$ invariant to more groups than $\bOmega''$.
The penalty in \cref{eq:penalty} however cannot distinguish between these two sets of weights as they {\em use} the same subspaces and thus, $R(\bOmega') = R(\bOmega'')$. We provide an example in the case of sequence inputs in \cref{appx:drawback} and leave the solution as future work.

{\em Selecting regularization strength $\lambda$}: We use a held-out training set to find the best validation accuracy achieved by any value of $\lambda$. Then, among all the values of $\lambda$ that achieve validation accuracy within 5\% of the best validation accuracy, we choose the largest $\lambda$ (i.e., we opt for maximum invariance without significantly affecting validation performance).

\vspace{-2pt}
\section{\CFCinvariant Neural Architectures}\label{sec:architectures}
\vspace{-5pt}
For {\bf image tasks}: We can apply the CG-regularization of \Cref{eq:neuron,eq:constrained} in the convolutional layers of a CNN architecture like VGG~\citep{simonyan2014very}. Mostly,  the VGG architecture remains the same with the exception that the convolutional filters are obtained using the subspaces from \cref{lem:partition} for the given groups.
Once the filter is obtained as a linear combination of the bases, it is convolved with the image or the feature maps. This will ensure that the model is CG-invariant to the transformations of smaller patches in the image. 
A sum-pooling layer over the entire channel is applied after all the convolutional layers to ensure that the model can be CG-invariant to the transformations on the whole image.
See \cref{appx:architectures_images} for an example architecture.

For {\bf sequence and array tasks} (sets, graph \& tensor tasks), the architecture is more direct: One can simply apply a feedforward network with as many hidden layers as needed. \update{Each neuron of the first layer is as given by \Cref{eq:neuron}, ensuring that the first layer can be CG-invariant to the given groups if needed. Other layers can have regular neurons since stacking dense layers after a CG-invariant layer does not undo the CG-invariance. }
See \cref{appx:architectures_sequences} for an example architecture.

\vspace{-2pt}
\section{Empirical Results} \label{sec:results}
\vspace{-5pt}
We now provide empirical results of 12 different tasks to showcase the properties and advantages of our framework
\footnote{Public code available at: \url{https://github.com/PurdueMINDS/NN_CGInvariance}}.
Due to space limitations, our results are only briefly summarized here, with most of the details described in \Cref{appx:results}.
\Cref{appx:InvCounterfactual} also shows a task where \CFCinvariance is stronger than G-invariance, showing the practical relevance of~\Cref{thm:InvCounterfactual}.

{\bf Validation of our learning framework (CGreg):} In 12 different image and sequence tasks, we confirmed that our CG-regularization of \cref{eq:constrained} is able to selectively learn to be invariant to the largest overgroup that doesn't contradict the training data, all of this {\em without} any evidence in the data supporting the invariance.
The results are summarized in \Cref{table1}, which also shows that both standard neural networks and forced G-invariant networks 
do not extrapolate to new environments when $\Sbar \neq \emptyset$ and $\Sbar \subsetneq \{1,\ldots,m\}$, respectively.

{\bf $X^\text{(hid)}$ and Transformation groups:} $X^\text{(hid)}$ is the canonically ordered input (e.g., upright images, sorted sequences). Our task considers $m$ linear automorphism groups $\cG_1,\ldots,\cG_m$. 
We generate $\cG_\Sbar$ from a subset $\Sbar \subseteq \{1,\ldots,m\}$ of the groups, i.e., $\cG_\Sbar=\langle \cG_{i \in \Sbar} \rangle$. We construct $\cG_\Sdep$ using a subset of $\{1,\ldots,m\}\setminus \Sbar$, while ensuring that $\cG_\Sbar \trianglelefteq \overgroup$ in order to fulfill the conditions in \Cref{thm:normalsubgroup}. 

For {\em image} tasks, $X^\text{(hid)}$ is an upright MNIST image and the $m=3$ groups are $\cG_\text{rot}, \cG_\text{color}, \cG_\text{vertical-flip}$. 
For {\em sequence} tasks, we sample $X^\text{(hid)}$ as a sequence of $n$ \textit{sorted} integers from a fixed vocabulary and consider $m = \binom{n}{2}$ permutation groups for all the pair-wise permutations: $\cG_{1,2},\cG_{2,3},\cG_{1,3},\ldots,\cG_{n-1,n}$, where $\cG_{i, j} := \{T_\text{identity}, T_{i,j}\}$ and $T_{i,j}$ swaps positions $i$ and $j$ in the sequence.

\begin{table}[t!!]
\caption{\small Extrapolation accuracy ($\pm$ 95\% confidence interval, {\bf bold} means $p<0.05$ significant)}\label{table1}
\centering
\resizebox{\textwidth}{!}{
\begin{tabular}{lrrrrrr|lrrr}
\multicolumn{7}{c}{Image transformation groups $\{\cG_\text{rot},  \cG_\text{vertical-flip}, \cG_\text{color}\}$} & \multicolumn{4}{c}{Sequences $\{\cG_{1,2},\ldots,\cG_{n-1,n}\}$} \\
\multicolumn{7}{c}{\bf Task: Predict digit \& which transformations of $\cG_\Sdep$ was applied to image}
&
\multicolumn{4}{c}{\bf Tasks depend on $\Sbar$ (see \cref{appx:results})}

\\
\cmidrule(lr){1-7} \cmidrule(lr){8-11} 
\multicolumn{4}{c}{MNIST $\{3,4\}$ images} &  \multicolumn{3}{c}{MNIST all images}&  
\multicolumn{4}{c}{Sequence Tasks}\\
\cmidrule(lr){1-4}\cmidrule(lr){5-7}\cmidrule(lr){8-11}
\multicolumn{1}{c}{$\Sbar$} & \multicolumn{1}{c}{VGG} & \multicolumn{1}{c}{+G-inv} & \multicolumn{1}{c}{{\bf +CGreg}} &   \multicolumn{1}{c}{VGG} & \multicolumn{1}{c}{+G-inv} & \multicolumn{1}{c}{\bf +CGreg} & \multicolumn{1}{c}{$\Sbar$} & \multicolumn{1}{c}{Transformer} & \multicolumn{1}{c}{Best FF+G-inv} & \multicolumn{1}{c}{FF{\bf +CGreg}}  \\
$\emptyset$ &  {\bf 96.06$\pm$0.63} & 15.96$\pm$2.17 & {94.49$\pm$01.49} & { 89.35$\pm$0.52} & 15.64$\pm$1.55 & {\bf 90.89$\pm$0.93} 
& $\emptyset$ & {\bf 100.00$\pm$0.00} & 23.38$\pm$1.88 & 95.70$\pm$03.05 \\
color  & 15.06$\pm$6.70 & 50.05$\pm$2.17 & {\bf 94.16$\pm$06.43} & 4.51$\pm$1.36 & 47.61$\pm$0.45 & {\bf 88.69$\pm$2.11} & 
$\{(i,\! i\!\!+\!\!2k)\}_{i,k}$ & 0.85$\pm$0.37 & 0.97$\pm$0.60 & {\bf 71.85$\pm$26.61} \\
rot,vflip & 54.87$\pm$0.90 & 32.05$\pm$1.16 & {\bf 95.78$\pm$07.11} & 25.91$\pm$0.95 & 44.41$\pm$3.28 & {\bf 62.68$\pm$6.02} 
& $\{(i,j)\}_{j>i\geq 2}$ & 12.15$\pm$16.05 & 10.68$\pm$1.49 & {\bf 42.08$\pm$18.99} \\
rot,col,vflip & 49.52$\pm$2.37 & {\bf 97.19$\pm$1.02} & {\bf 94.89$\pm$07.49} & 11.27$\pm$0.34 & {\bf 68.46$\pm$2.83} & {64.99$\pm$2.76} 
& $\{(i,j)\}_{j>i\geq 1}$ & 20.26$\pm$32.08 & {\bf 100.00$\pm$0.00} & {\bf 100.00$\pm$00.00} 
\end{tabular}
} 
\vspace{-15pt}
\end{table} 
{\bf Training data:} 
The training data is sampled via the SCM equations using an economical data generation process.
We decompose the transformation $T_{U_\Sbar, U_\Sdep}$ into a transformation $T_{\noise_\Sdep} \in \cG_\Sdep$ followed by another transformation $T_{\noise'_\Sbar | \noise_\Sdep} \in \cG_\Sbar$ to obtain $\obsX = T_{\noise'_\Sbar | \noise_\Sdep} \circ T_{\noise_\Sdep} \circ X^\text{(hid)}$.  
This decomposition is made possible from our assumption that $\cG_\Sbar$ is a normal subgroup of $\cG_{\Sdep \cup \Sbar}$ (\cref{thm:normalsubgroup}).
Under the assumption of economic sampling of the training data, in all our experiments we simply set $T_{\noise'_\Sbar | \noise_\Sdep} = T_\text{identity} \in \cG_\Sbar$, whereas $T_{\noise_\Sdep}$ is randomly sampled from $\cG_\Sdep$. Finally, following \cref{eq:Y}, the label $Y$ is a combination of the original label of $X^\text{(hid)}$ and the transformation $T_{\noise_\Sdep}$. 

\update{Example (\cref{table1}, row: rot,vflip): For {\em image tasks}, if $\cG_\Sbar = \cG_\text{rot, vertical-flip}$ and $\cG_\Sdep = \cG_\text{color}$, then the training data consists of \textbf{upright} and \textbf{unflipped} images (as $T_{\noise'_\Sbar | \noise_\Sdep} = T_\text{identity}$) with different permutations of the color channels (random transformations $T_{\noise_\Sdep} \in \cG_\text{color}$ are chosen). }
The task is to predict the original label of the image (i.e., the digit) and the transformation $T_\Sdep$ (i.e., the color).

{\bf Extrapolation task:} The extrapolated test data consists of samples from the coupled random variable $\cfX = T_{\widetilde{\noise}_\Sbar,\noise_\Sdep} \circ X^\text{(hid)}$ (\Cref{def:coupling}).  
As before, we decompose $T_{\widetilde{\noise}_\Sbar,\noise_\Sdep} = T_{\widetilde{\noise}'_\Sbar | \noise_\Sdep} \circ T_{\noise_\Sdep}$ with $T_{\noise_\Sdep} \in \cG_\Sdep$ and $T_{\widetilde{\noise}'_\Sbar | \noise_\Sdep} \in \cG_\Sbar$. However, there is no economic sampling for the test data: $T_{\widetilde{\noise}'_\Sbar | \noise_\Sdep}$ and $T_{\noise_\Sdep}$ are sampled randomly from $\cG_\Sbar$ and $\cG_\Sdep$ respectively. The task is the same as in the training data.

\update{Example (\cref{table1}, row: rot,vflip): For {\em image tasks}, if $\cG_\Sbar = \cG_\text{rot, vertical-flip}$ and $\cG_\Sdep = \cG_\text{color}$, then the extrapolation test data consists of images randomly \textbf{rotated}, \textbf{flipped} and \textbf{color permuted}, while the task is the same: predict the digit and its color. }

{\bf Results:} 
Standard neural networks such as CNNs (e.g., VGG~\citep{simonyan2014very}) (for images) and GRUs/Transformers~\citep{cho2014properties,vaswani2017} (for sequences) fail whenever the extrapolation task requires some invariance ($\Sbar \neq \emptyset$), but excel at the {\em interpolation task} ($\Sbar = \emptyset$). 
Adding forced $\cG_{\Sdep \cup \Sbar}$-invariances via G-CNNs \citep{cohen2016} (for images) and permutation-invariant models~\citep{lee2019,murphy2019a,zaheer2017} (for sequences) clearly fails when $\Sdep \neq \emptyset$ but succeeds when $\Sdep = \emptyset$. 
Our CG-regularized neural network representations, on the other hand, achieve high extrapolation accuracy across all tasks for all choices of $\Sbar \subseteq \{1,\ldots,m\}$ and $\Sdep \subseteq \{1,\ldots,m\}\setminus \Sbar$. 
These results plainly show that our approach is able to selectively learn to be invariant only to the appropriate groups. Furthermore, this $\cG_\Sbar$-invariance is achieved without any evidence in the training data, thanks to our novel learning paradigm that considers all G-invariances mandatory unless contradicted by the training data.

\vspace{-5pt}
\section{Conclusion}
\vspace{-5pt}
This work studied the task of learning representations that can extrapolate beyond the training data distribution (environment), even when presented with a single training environment.
We considered the case of (counterfactual) extrapolation from linear automorphism groups and described a framework where all G-invariances (and \CFCinvariances via \Cref{thm:normalsubgroup}) are mandatory, except the ones deemed inconsistent with the training data (i.e., rather than learning G-invariances, we unlearn them).  
Our framework reframes the standard statistical learning hypothesis that {\em unseen-data means underspecified-models} with a learning hypothesis that forces models to have all (known) G-invariances (symmetries) that do not contradict the data, with our empirical results supporting the proposed approach.
Finally, this learning paradigm offers a promising novel research direction for neural network extrapolations. 
\vspace{-5pt}
\subsubsection*{Acknowledgments}
\vspace{-5pt}
This work was funded in part by the National Science Foundation (NSF) Awards CAREER IIS-1943364 and CCF-1918483, the Purdue Integrative Data Science Initiative, and the Wabash Heartland Innovation Network.  Any opinions, findings and conclusions or recommendations expressed in this material are those of the authors and do not necessarily reflect the views of the sponsors.

\bibliographystyle{plainnat}

\newpage

\appendix
\begin{center}
\Large \Appendix of ``\mytitle''
\end{center}

\section{The practical importance of \Cref{thm:InvCounterfactual}}\label{appx:InvCounterfactual}
\begin{figure}[h!]
    \centering
    \includegraphics[scale=0.2]{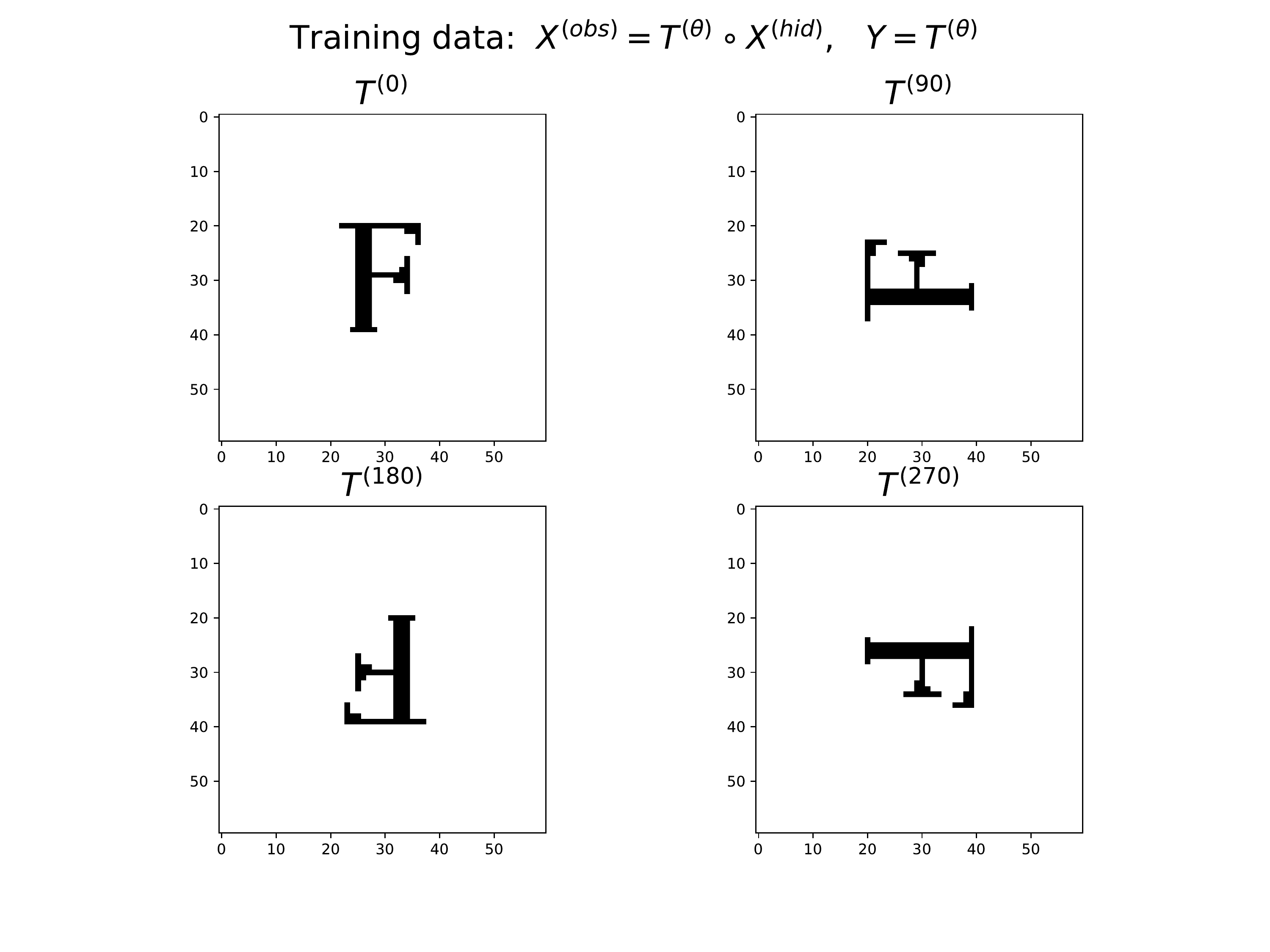}
    \includegraphics[scale=0.2]{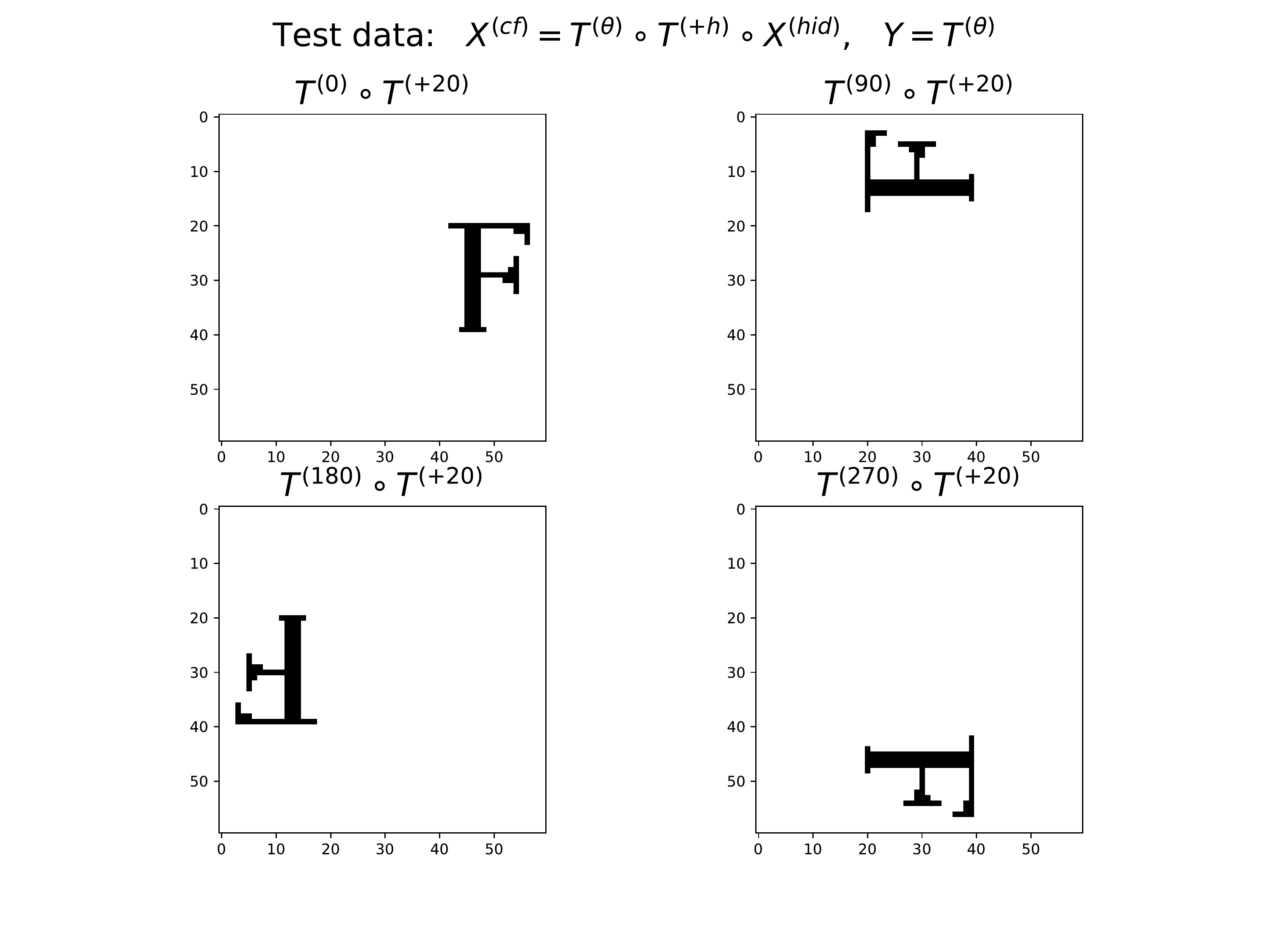}
    \caption{An example task where CG-invariance is stronger than G-invariance. The task is to predict the orientation of the image while being CG-invariant to horizontal translations.}
    \label{fig:CGstronger}
\end{figure}
{\em There are real tasks where \CFCinvariance is stronger than G-invariance.}
We consider a task with $60\times 60$ image shown in \cref{fig:CGstronger} and two transformation groups: the rotation group $\cG_\text{rot}$ and the cyclic horizontal-translation group $\cG_\text{h-translate} \cong \mathbb{Z}_{60}$.
Each transformation $T^{(\theta^\circ)} \in \cG_\text{rot}$ rotates the image along its center by $\theta^\circ$, whereas every transformation $T^{(+h)} \in \cG_\text{h-translate}$ translates the image horizontally by $h$ pixels while wrapping around the edges. Let $\cG_\Sbar = \cG_\text{rot}$ and $\cG_\Sdep = \cG_\text{h-translate}$. 
The training data consists of images $\obsX = T^{(\theta^\circ)} \circ X^\text{(hid)}$ for all $T^{(\theta^\circ)} \in \cG_\text{rot}$, whereas the test data consists of images $\cfX = T^{(\theta^\circ)} \circ T^{(+20)} \circ X^\text{(hid)}$ for all $T^{(\theta^\circ)} \in \cG_\text{rot}$
The task is to predict the orientation of the image, i.e., degrees of rotation. It is easy to see that the label requires \CFCinvariance to $\cG_\text{h-translate}$ but sensitivity to $\cG_\text{rot}$. 
We train a {\em strictly} $\cG_\text{h-translate}$-invariant model on this dataset; whereas the model is able to achieve a 100\% accuracy on training, it does poorly with 75\% on test dataset, showing that it is not enough to be $\cG_\Sbar$-invariant to achieve \CFCinvariance.

\section{Proofs}\label{appx:proofs}

\subsection{Generating any $T \in \overgroup$ using noises $U_\Sbar$ and $U_\Sdep$} \label{appx:indexing}
The structural causal model for $\obsX$ in \cref{eq:Xobs} requires that any $T \in \overgroup$ can be indexed by the hidden background variables $U_\Sbar$ and $U_\Sdep$. We first interpret $U_\Sdep$ (or $U_\Sbar$) as the random seed of a random number generator that gives an ordered sequence of transformations of $\cG_\Sdep$ (or $\cG_\Sbar$). We assume that these background noise variables can generate any sequence of transformations from within their respective groups.
Let $T^{(1)}_\Sdep, \ldots, T^{(a)}_\Sdep$ and $T^{(1)}_\Sbar, \ldots, T^{(b)}_\Sbar$ be those ordered sequences respectively generated by $U_\Sdep$ and $U_\Sbar$. Then we can obtain a transformation in $\overgroup$ by interleaving these two sequences (in order): $T_{U_\Sdep, U_\Sbar} = T^{(1)}_\Sbar \circ T^{(1)}_\Sdep \circ T^{(2)}_\Sbar \circ \ldots$. Note that $U_\Sbar$ and $U_\Sdep$ can always sample the identity transformation from the respective groups in the corresponding sequences, i.e., $T^{(i)}_\Sbar$ or $T^{(i)}_\Sdep$ can be identity.

Now, it is a known result in group theory that any $T \in \langle \cG_\Sdep \cup \cG_\Sbar \rangle$ is such that $T = T_1 \circ T_2 \circ T_3 \circ \ldots$, where $T_i$ is in either $\cG_\Sbar$ or $\cG_\Sdep$. Then, if $T_1 \in \cG_\Sdep$, we can write $T_1 = T^{(1)}_\Sbar \circ T^{(1)}_\Sdep$ with $T^{(1)}_\Sbar = T_\text{identity} \in \cG_\Sbar$ and $T^{(1)}_\Sdep = T_1 \in \cG_\Sdep$. 
Continuing in a similar fashion, we can find two sequences of transformations, one from $\cG_\Sbar$ and the other from $\cG_\Sdep$, such that interleaving and composing the resultant sequence of transformations gives us any transformation from $\overgroup$. This property of the noises to appropriately index any $T \in \overgroup$ will be used in the proof of \cref{thm:InvCounterfactual,thm:normalsubgroup}.

\subsection{Proof of \Cref{thm:InvCounterfactual,thm:normalsubgroup}} \label{appx:proofs_thm12}

\cfcinvthm*

\begin{proof}
\begin{figure}[h]
    \centering
    \includegraphics[scale=0.5]{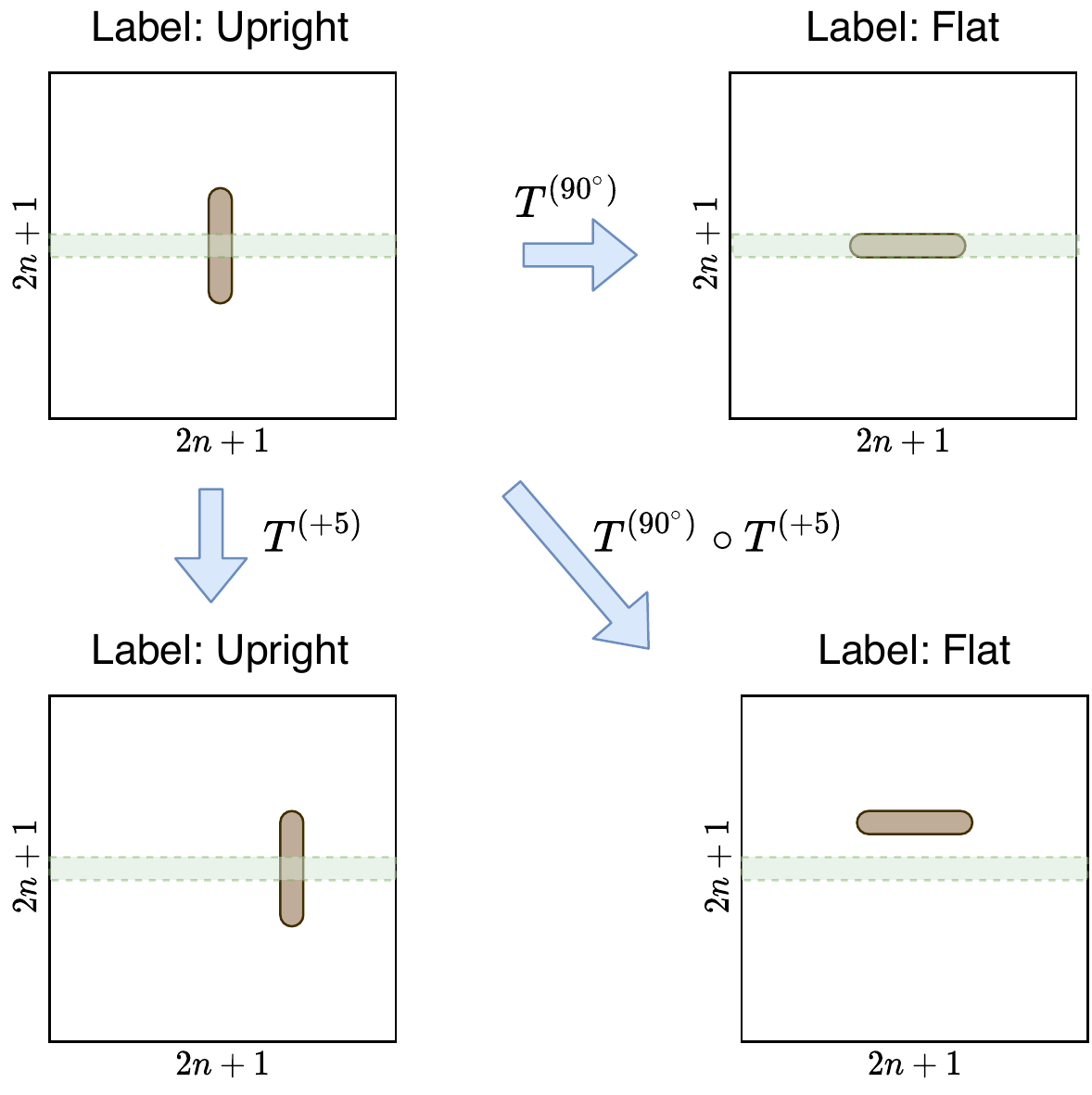}
    \caption{Counterexample to show that $\cG_\Sbar$-invariance does not imply \CFCinvariance. Given images of a rod (shown in brown), we wish to predict the orientation of the rod, i.e., whether the rod is upright or flat. In this example, we have $\cG_\Sdep = \cG_\text{rot}$ and $\cG_\Sbar=\cG_\text{h-translate}$ as any horizontal translation does not affect the orientation of the rod. $\Gamma:\cX \rightarrow \sR$ sums the pixel values across the green shaded region, and is clearly G-invariant to horizontal translations. However, $\Gamma$ is not \CFCinvariant.} 
    \label{fig:thm1}
\end{figure}

First, we will show that \CFCinvariance $\implies$ G-invariance, i.e., for any CG-invariant representation $\Gamma: \cX \rightarrow \sR^d$, we will show that $\Gamma$ is also G-invariant to $\cG_\Sbar$.

Consider any $u \in \text{supp}(U_\Sbar)$ and say the input was generated as $\cfXu{u} = T_{U_\Sdep, U_\Sbar \leftarrow u} \circ \obsX^\text{(hid)}$. In other words, $U_\Sbar$ took the value $u$ in the structural causal equation for generating the observed input (\cref{eq:Xobs}).
We will prove G-invariance for this input $\cfXu{u}$, i.e., $\Gamma(T_\Sbar^\dagger \circ \cfXu{u}) = \Gamma(\cfXu{u})$ for any $T_\Sbar^\dagger \in \cG_\Sbar$. 

Recall that $T_{U_\Sdep, U_\Sbar\leftarrow u}$ was generated by interleaving two separate sequences of transformations obtained via the background variables $U_\Sdep$ and $U_\Sbar$ respectively (\cref{appx:indexing}). 
In other words, we can write $T_{U_\Sdep, U_\Sbar\leftarrow u} = T^{(1)}_\Sbar \circ T^{(1)}_\Sdep \circ T^{(2)}_\Sbar \circ \ldots \circ T^{(*)}$, where $T^{(i)}_\Sbar \in \cG_\Sbar$ and $T^{(j)}_\Sdep \in \cG_\Sdep$ and $T^{(*)}$ depends upon which of the respective sequences before interleaving is longer. 
Then, $T^\dagger_\Sbar \circ T_{U_\Sdep, U_\Sbar\leftarrow u} = T^{\dagger}_\Sbar \circ T^{(1)}_\Sbar \circ T^{(1)}_\Sdep \circ T^{(2)}_\Sbar \circ \ldots \circ T^{(*)}$. Further, if we write $T'^{(1)}_\Sbar = T^{\dagger}_\Sbar \circ T^{(1)}_\Sbar$, then we have  $T^\dagger_\Sbar \circ T_{U_\Sdep, U_\Sbar\leftarrow u} = T'^{(1)}_\Sbar \circ T^{(1)}_\Sdep \circ T^{(2)}_\Sbar \circ \ldots \circ T^{(*)}$. 

Now we can find a $\widetilde{u}$ such that $U_\Sbar \leftarrow \widetilde{u}$ generates the sequence of transformations $T'^{(1)}_\Sbar,  T^{(2)}_\Sbar, \ldots $. Interleaving this sequence with the sequence generated by $U_\Sdep$, we get $T_{U_\Sdep, U_\Sbar \leftarrow \widetilde{u}} = T'^{(1)}_\Sbar \circ T^{(1)}_\Sdep \circ T^{(2)}_\Sbar \circ \ldots \circ T^{(*)}$. Denote $\cfXu{\widetilde{u}} = T_{U_\Sdep, U_\Sbar \leftarrow \widetilde{u}} \circ X^\text{(hid)}$. Since $\Gamma$ is CG-invariant, we have from \cref{def:CFCinv} that 
\begin{align*}
\Gamma(\cfXu{u}) 
&= \Gamma(\cfXu{\widetilde{u}}) \\
&= \Gamma(T_{U_\Sdep, U_\Sbar \leftarrow \widetilde{u}} \circ X^\text{(hid)}) \\
&= \Gamma(T^\dagger_\Sbar \circ T_{U_\Sdep, U_\Sbar \leftarrow u} \circ X^\text{(hid)}) \tag*{(from construction of $\widetilde{u}$)}\\
&= \Gamma(T^\dagger_\Sbar \circ \cfXu{u}) \:.
\end{align*}
Since this holds for all $u \in \text{supp}(U_\Sbar)$, we have that $\Gamma(\obsX) = \Gamma(T^\dagger_\Sbar \circ \obsX)$.

Next, we will show G-invariance $\notimplies$ \CFCinvariance by constructing a counterexample. 
Let $X^\text{(hid)} \in \mathbb{R}^{(2n+1)\times (2n+1)}$ be the $(2n+1)\times (2n+1)$ grayscale image of an upright rod as shown in \cref{fig:thm1}. Consider two groups that act on this image: the rotation group $\cG_\text{rot} = \{T^{(k)}\}_{k\in \{0^\circ,90^\circ,180^\circ,270^\circ\}}$ and the cyclic horizontal-translation group $\cG_\text{h-translate} = \{T^{(+u)}\}_{u \in \sZ_n}$. Let $\cG_\Sdep = \cG_\text{rot}$ and $\cG_\Sbar = \cG_\text{h-translate}$ and the label of the image $Y$ deterministically given by the orientation of the rod: upright ($Y=0$) or flat ($Y=1$). 
The top row of \cref{fig:thm1} depicts the data in training which is transformed by $\cG_\text{h-translate}$ only via the identity $T^{(+0)}$ (i.e., no translation).

Now consider a representation $\Gamma: \sR^{(2n+1)\times (2n+1)} \rightarrow \sR$ such that $\Gamma(X) = \sum_{i=1}^{2n+1} X_{n, i}$ finds the sum of the middle row of the image. Note that \textbf{(a)} $\Gamma$ is able to distinguish between the labels for the training data, and \textbf{(b)} $\Gamma$ is $\cG_\text{h-translate}$-invariant. 

We can define the random variables $U_\Sbar$ and $\widetilde{U}_\Sbar$ such that  $\obsX = T^{(90^\circ)} \circ X^\text{(hid)}$ and $\cfX = T^{(90^\circ)} \circ T^{(+5)} \circ X^\text{(hid)}$. Then, as shown in \cref{fig:thm1}, $\Gamma(\cfX) = \Gamma(T^{(90^\circ)} \circ T^{(+5)} \circ X^\text{(hid)}) \neq \Gamma(T^{(90^\circ)} \circ X^\text{(hid)})$, thus showing that $\Gamma$ is 
not \CFCinvariant.

\end{proof}

\normalsubgroupthm*
\begin{proof}
The proof that \CFCinvariance $\implies$ G-invariance (from \cref{thm:InvCounterfactual}) still holds here. 
We only need to prove the converse: G-invariance $\implies$ \CFCinvariance when $\cG_\Sbar$ is a normal subgroup of $\overgroup$. We begin with a representation $\Gamma$ that is $\cG_\Sbar$-invariant 
and consider the simpler case when $U_\Sdep$ generates a transformation sequence of length 1 (from $\cG_\Sdep$). In other words, $\obsX$ is obtained by: $\obsX = T^{(1)}_\Sbar \circ T_\Sdep \circ T^{(2)}_\Sbar \circ X^\text{(hid)}$ for arbitrary transformations $T_\Sdep \in \cG_\Sdep$ and $T^{(1)}_\Sbar, T^{(2)}_\Sbar \in \cG_\Sbar$. 

Then for any $\widetilde{U}_\Sbar$, we have that $\cfX = \widetilde{T}^{(1)}_\Sbar \circ T_\Sdep \circ \widetilde{T}^{(2)}_\Sbar \circ X^\text{(hid)}$ with $\widetilde{T}^{(1)}_\Sbar, \widetilde{T}^{(2)}_\Sbar \in \cG_\Sbar$. Note that $\widetilde{U}_\Sbar$ only affects the transformations from $\cG_\Sbar$.
The condition for \CFCinvariance with respect to $\cG_\Sbar$ requires that
\begin{align}\label{eq:thm1_cfc_condition}
\text{\bf requirement: } ~~~ \Gamma(\obsX) = \Gamma(T^{(1)}_\Sbar \circ T_\Sdep \circ T^{(2)}_\Sbar \circ X^\text{(hid)}) = 
\Gamma(\widetilde{T}^{(1)}_\Sbar \circ T_\Sdep \circ \widetilde{T}^{(2)}_\Sbar \circ X^\text{(hid)}) = \Gamma(\cfX) \:.
\end{align}

Since $\cG_\Sbar$ is a normal subgroup of $\overgroup$ and $\cG_\Sdep \leq \overgroup$, we have 
\begin{align*}
    \forall T_\Sdep \in \cG_\Sdep, ~~\forall T_\Sbar \in \cG_\Sbar, ~~T_\Sdep \circ T_\Sbar \circ T_\Sdep^{-1} \in \cG_\Sbar, 
\end{align*}
or equivalently, 
\begin{align}\label{eq:normal_subgroup_property}
    \forall T_\Sdep \in \cG_\Sdep, ~~\forall T_\Sbar \in \cG_\Sbar,  &~~\exists T'_\Sbar, ~~\text{s.t.,} \nonumber \\
    T_\Sdep \circ T_\Sbar \circ T_\Sdep^{-1} &= T'_\Sbar \nonumber \\
    \implies ~~~~~~T_\Sdep \circ T_\Sbar &= T'_\Sbar \circ T_\Sdep         
\end{align}

(A special case is when the groups $\cG_\Sdep$ and $\cG_\Sbar$ commute, as then $T_\Sdep \circ T_\Sbar = T_\Sbar \circ T_\Sdep$.)

Then,
\begin{align*}
\Gamma(\obsX) 
&= \Gamma(T^{(1)}_\Sbar \circ T_\Sdep \circ T^{(2)}_\Sbar \circ X^\text{(hid)}) \\
&= \Gamma(T_\Sdep \circ T^{(2)}_\Sbar \circ X^\text{(hid)}) \tag*{($\Gamma$ is invariant to $\cG_\Sbar$)}  \\
&= \Gamma(T'_\Sbar \circ T_\Sdep \circ X^\text{(hid)})  \tag*{(there exists such a $T'_\Sbar \in \cG_\Sbar$)} \\
&= \Gamma(T_\Sdep \circ X^\text{(hid)})  \tag*{($\Gamma$ is invariant to $\cG_\Sbar$)}  
\end{align*}
Similarly, we can prove for the coupled variable that $\Gamma(\cfX) = \Gamma(\widetilde{T}^{(1)}_\Sbar \circ T_\Sdep \circ \widetilde{T}^{(2)}_\Sbar \circ X^\text{(hid)}) = \Gamma(T_\Sdep \circ X^\text{(hid)})$, thus satisfying the requirement of \CFCinvariance in \cref{eq:thm1_cfc_condition}.

Extension to the case when $U_\Sdep$ generates transformation sequences of length greater than one is trivial. Any transformation $T_{U_\Sdep, U_\Sbar} = T^{(1)}_\Sbar \circ T^{(1)}_\Sdep \circ T^{(2)}_\Sbar \cdots \circ T^{(*)}$ can be written in the form $T^\dagger_\Sbar \circ T^{(1)}_\Sdep \circ T^{(2)}_\Sdep \circ \cdots$ by repeatedly applying the normal subgroup property in \cref{eq:normal_subgroup_property}. 
Then $\Gamma(T_{U_\Sdep, U_\Sbar} \circ X^\text{(hid)}) = \Gamma(T^\dagger_\Sbar \circ T^{(1)}_\Sdep \circ T^{(2)}_\Sdep \circ \cdots \circ X^\text{(hid)}) = \Gamma(T^{(1)}_\Sdep \circ T^{(2)}_\Sdep \circ \cdots \circ X^\text{(hid)})$ as $\Gamma$ is $\cG_\Sbar$-invariant. Using a similar argument, we can show for the coupled variable that $\Gamma(T_{U_\Sdep, \widetilde{U}_\Sbar} \circ X^\text{(hid)}) = \Gamma(T^{(1)}_\Sdep \circ T^{(2)}_\Sdep \circ \cdots \circ X^\text{(hid)})$, thus proving that $\Gamma$ is \CFCinvariant, i.e., $\Gamma(\obsX) = \Gamma(\cfX)$.
\end{proof}

\subsection{Proofs of \Cref{lem:Tbar}, \Cref{lem:invNeuron} and \Cref{lem:partition}} \label{appx:proofs_thm3}

\Tbarlemma*
\begin{proof}
\update{
Consider an arbitrary transformation $T_\dagger \in \cG$. Then
\begin{align*}
\widebar{T} \circ T_\dagger &= \frac{1}{\vert \cG \vert}\sum_{T \in \cG} T \circ T_\dagger \\
&= \frac{1}{\vert \cG \vert}\sum_{T' \in \cG_\dagger} T' \:,
\end{align*}
where we define $\cG_\dagger = \{T \circ T_\dagger: \forall T \in \cG \}$.
Now, in order to prove $\widebar{T} \circ T_\dagger = \widebar{T}$, we only need to show that $\cG_\dagger = \cG$. 
Since groups are closed under compositions, we have $\forall T \in \cG$, $T \circ T_\dagger \in \cG$, and thus $\cG_\dagger  \subseteq \cG$.
Finally, since $T_\dagger$ is a bijection and $T_a \circ T_\dagger = T_b \circ T_\dagger$ only if $T_a = T_b$ for any $T_a, T_b \in \cG$, it must be that  $|\cG_\dagger| = |\cG|$. Hence, $\cG_\dagger  = \cG$.
}

\end{proof}

\invNeuron*
\begin{proof}
{\em Sufficiency:}
Let $\{\vw_i^T\}_{i=1}^{d_W}$ be the set of left eigenvectors of $\widebar{T}$ with eigenvalue 1 and constitute the orthogonal basis for $\cW$.
Consider any non-zero $\vw' \in \cW$, then
\begin{align}\label{eq:lemma2proofeq1}
(\vw')^T = \sum_{i=1}^{d_W} \alpha_i \vw_i^T = \sum_{i=1}^{d_W} \alpha_i \vw_i^T \widebar{T}
\end{align}
for some coefficients $\{\alpha_i\}_{i=1}^{d_W}$, where we used the fact that $\vw_i^T \widebar{T} = \vw_i^T\:, 1\leq i \leq d_W$.
For any $\vx \in \text{vec}(\cX)$ and any $T \in \cG$ we have,
\begin{align*}
\gamma(T\vx; \vw', b) &= (\vw')^T (T\vx) + b \\
&= \sum_{i=1}^{d_W} \alpha_i \vw_i^T \widebar{T}(T\vx) + b   \tag*{(using \cref{eq:lemma2proofeq1})} \\
&= \sum_{i=1}^{d_W} \alpha_i \vw_i^T \widebar{T}\vx + b   \tag*{(from \cref{lem:Tbar})} \\
&= \gamma(\vx; \vw', b)
\end{align*}

{\em Necessity:} 
Given a non-zero $\vw \in \cW$ and $b \in \sR$, let $\gamma(T\vx; \vw, b) = \gamma(\vx; \vw, b)$ for all $\vx \in \text{vec}(\cX)$ and all $T \in \cG$. Then,
\begin{align*}
    &\vw^T T\vx = \vw^T \vx \:, \quad \forall \vx, \forall T \\
    \implies &\vw^T T = \vw^T \:, \quad \forall T \\
    \implies &\vw^T \sum_{T\in \cG} T = \vert \cG \vert \vw^T \tag*{(summing over all $T \in \cG$)} \\
    \implies &\vw^T \widebar{T} = \vw^T \:.
\end{align*}
Hence proved that $\vw^T$ is a left eigenvector of $\widebar{T}$ with eigenvalue 1.

\end{proof}

\partitionlemma*

\begin{proof}
Throughout this proof, we will slightly abuse notation by calling a $\vw \in \text{vec}(\cX)$ as $\cG$-invariant for some group $\cG$, where we mean the transformation $\gamma(\cdot; \vw, b), b\in\sR$ is $\cG$-invariant.

Consider the subspace $\cB_{\supsetneq M} = \bigoplus_{N \supsetneq M} \cB_N$, where $\bigoplus$ is the direct sum operator. Essentially, $\cB_{\supsetneq M}$ is the direct sum of all the subspaces corresponding to the strict supersets of $M$.
Using induction on the size of $M$, we first show that $\cB_{\supsetneq M} = \bigoplus_{N \supsetneq M} \widetilde{\cB}_N$. The statement trivially holds for $\cB_{\supsetneq \{1,\ldots,m\}}$. 
Then the induction hypothesis is: for all sets $M$ such that $\vert M \vert > k$, we have $\cB_{\supsetneq M} = \bigoplus_{N \supsetneq M} \widetilde{\cB}_N$. We prove that the statement holds for any set $M$ with $\vert M \vert = k$ as follows, 
\begin{align*}
\cB_{\supsetneq M} &= \bigoplus_{N \supsetneq M} \cB_{N} \\
&= \bigoplus_{\substack{N \supsetneq M \\ \vert N \vert = \vert M \vert + 1}} \left( \cB_{N} \oplus \cB_{\supsetneq N} \right)\\
&= \bigoplus_{\substack{N \supsetneq M \\ \vert N \vert = \vert M \vert + 1}} \left(\text{orth}_{\cB_{\supsetneq N}}(\widetilde{\cB}_N) \oplus \cB_{\supsetneq N} \right) \tag*{(Definition of $\cB_N$)} \\
&= \bigoplus_{\substack{N \supsetneq M \\ \vert N \vert = \vert M \vert + 1}} \left(\widetilde{\cB}_{N} \oplus \cB_{\supsetneq N} \right) \tag*{(For vector subspaces $V$ and $W$, $\text{orth}_W(V) \oplus W = V \oplus W$)} \\
&= \bigoplus_{\substack{N \supsetneq M \\ \vert N \vert = \vert M \vert + 1}} \left(\widetilde{\cB}_{N} \oplus \widetilde{\cB}_{\supsetneq N} \right)  \tag*{(Inductive hypothesis holds for sets $N$ as $\vert N \vert > k$)} \\
&= \bigoplus_{N \supsetneq M} \widetilde{\cB}_{N} \:.
\end{align*}

This proves our claim that $\cB_{\supsetneq M} = \bigoplus_{N \supsetneq M} {\cB}_N = \bigoplus_{N \supsetneq M} \widetilde{\cB}_N$.

Now we are ready to prove the theorem. We begin by showing that any nonzero $\vw \in \text{vec}(\cX)$ is $\cG_M$-invariant where $\cG_M = \langle \cup_{i \in M} \cG_i \rangle$ iff  $\vw \in \widetilde{\cB}_M$. Since $\vw \in \widetilde{\cB}_M \iff \vw \in \cW_i, ~~\forall i\in M$, we have from \cref{lem:invNeuron} that any $\vw \in \widetilde{\cB}_M$ is $\cG_i$-invariant for all $i \in M$. Then it is easy to see that any nonzero $\vw$ is $\cG_M$-invariant iff it is $\cG_i$-invariant for all $i\in M$. It is possible to have $\widetilde{\cB}_M = \{\vzero\}$ implying that there is no nonzero $\vw \in \text{vec}(\cX)$ that is $\cG_M$-invariant. 

Next note that for all $N \supsetneq M$, we have $\widetilde{\cB}_N \subseteq \widetilde{\cB}_M$ (using the definition of $\widetilde{\cB}_M$). Then, their direct sum is the smallest subspace containing all such $\widetilde{\cB}_N$ and thus, $\bigoplus_{N \supsetneq M} \widetilde{\cB}_N \subseteq \widetilde{\cB}_M$. 
From our claim earlier, this implies that $\cB_{\supsetneq M} = \bigoplus_{N \supsetneq M} \widetilde{\cB}_N \subseteq \widetilde{\cB}_M$. 
Finally, we have $\cB_M = \text{orth}_{\cB_{\supsetneq M}}(\widetilde{\cB}_M) \subseteq \widetilde{\cB}_M$ for all $M$.
Thus, we have proved that any nonzero $\vw \in \cB_M$ also lies in $\widetilde{\cB}_M$ and hence is invariant to $\cG_M$.

In the sequel, we will prove that any $\vw \in \cB_M$ is {\bf not} $\cG_j$-invariant for any $j \in \{1,\ldots,m\} \setminus M$.
Let $P \supsetneq M$. Then it is clear that $\cB_{\supsetneq M} = \bigoplus_{N \supsetneq M} \widetilde{\cB}_N \supseteq \widetilde{\cB}_P$, 
which implies from the first part of our proof that any $\vw \in \text{vec}(\cX)$ that is $\cG_P$-invariant lies inside $\cB_{\supsetneq M}$. The orthogonalization step ensures that $\cB_M \perp \cB_{\supsetneq M}$ and thus, $\cB_M \perp \widetilde{\cB}_{P}$ and $\cB_M \cap \widetilde{\cB}_{P} = \{\vzero\}$. Hence there is no nonzero $\vw \in \cB_M$ such that $\vw$ is $\cG_P$-invariant. 
This applies for all supersets $P \supsetneq M$. 

Finally, we consider supersets of $M$ of the form $P' = M \cup \{j\}$ for $j \in \{1,\ldots,m\} \setminus M$. If a nonzero $\vw \in \cB_M$ is invariant to $\cG_j$, then it will hold that $\vw$ is invariant to $\cG_{P'}, P' \supsetneq M$, resulting in a contradiction. Hence, we have that if $\cB_M \neq \{\vzero\}$, any $\vw \in \cB_M \setminus \{\vzero\}$ is $\cG_M$-invariant but not $\cG_j$-invariant for any $j \in \{1,\ldots,m\}\setminus M$.

\end{proof}

\section{Example construction of CG-invariant neurons} \label{appx:example}
\update{In this section, we will present a detailed example of the construction of CG-invariant neurons. Consider a $3\times 3$ image with 3 channels, thus $\cX = \sR^{3\times 3\times 3}$. Then, a convolutional filter $\vw \in \cX = \sR^{3\times 3\times 3}$ multiplies elementwise with the image $\vx \in \cX$. 

Consider $m=2$ groups $\cG_\text{rot}$ and $\cG_\text{col}$, the former rotates the image patch by 90-degree multiples and the latter permutes the color channels of the image. Our goal is to enforce invariance to rotation and color channel unless contradicted by training data. Note that $\text{vec}(\cX) = \sR^{27}$. 
}

\paragraph{Step 1: Construct 1-eigenspace of Reynolds operator for each group.}
\begin{figure}
    \centering
    \begin{subfigure}{0.45\textwidth}
    \includegraphics[scale=0.4]{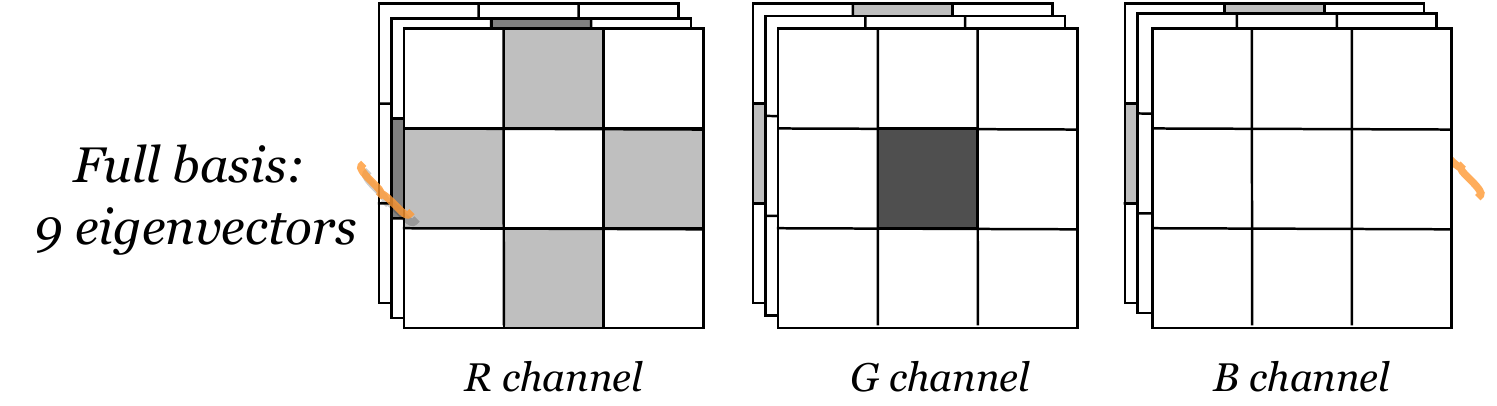}
    \caption{$\cW_\text{rot}$}
    \end{subfigure}
    \begin{subfigure}{0.45\textwidth}
    \includegraphics[scale=0.4]{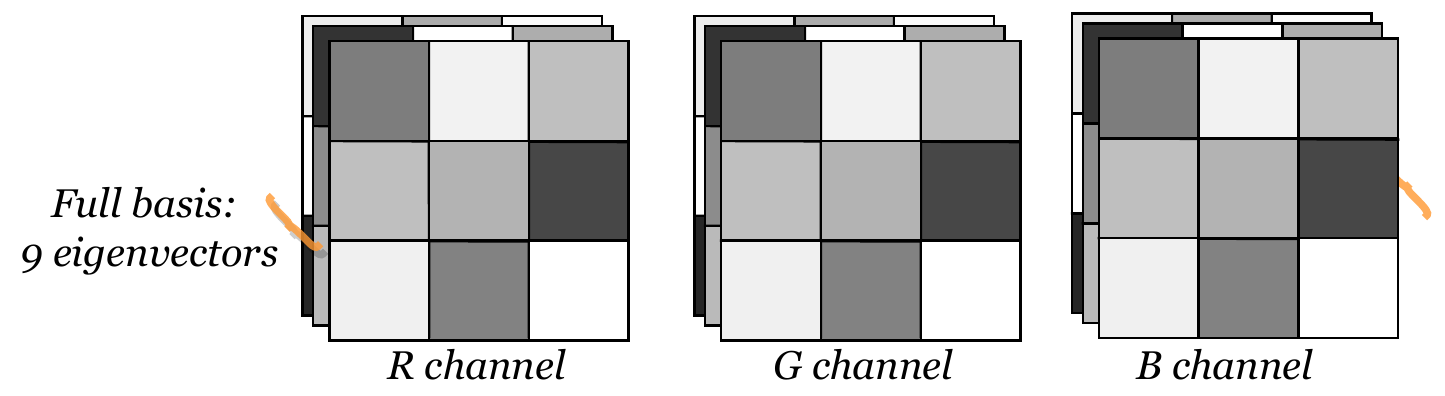}
    \caption{$\cW_\text{col}$}
    \end{subfigure}
    \caption{\update{\textbf{(a)} 1-eigenspace of the Reynolds operator for the rotation group. The eigenspace has nine basis vectors $v \in \sR^{27}$ (stacked). We are representing these eigenvectors in $\sR^{3\times 3\times 3}$ instead to emphasize that these are \textbf{rotation-invariant}. \textbf{(b)} 1-eigenspace of the Reynolds operator for the color-permutation group. The eigenspace again has nine basis vectors $v \in \sR^{27}$ but we represent them in $\sR^{3\times 3\times 3}$ to emphasize that these are \textbf{invariant to permutations of color channels}.}}
    \label{fig:example_basis}
\end{figure}

\update{Since we only consider linear automorphism groups, each transformation $T$ in the group can be written as $T(x) = \mT \vx$, where $\mT$ is a matrix of size $\sR^{27 \times 27}$ and $\vx \in \text{vec}(\cX) = \sR^{27}$. 
Given a group, we can directly use \cref{lem:Tbar} to construct the Reynolds operator by averaging over all the linear transformations (or corresponding matrices) in the group. Then, we can use standard methods in linear algebra to find the 1-eigenspace of the Reynolds operator (i.e., find the eigenvectors with corresponding eigenvalues equal to 1). 

Let $\cW_\text{rot}$ and $\cW_\text{col}$ be the 1-eigenspaces of the Reynolds operator of the groups $\cG_\text{rot}$ and $\cG_\text{col}$ respectively. \cref{fig:example_basis} shows these eigenspaces with the eigenvectors arranged in $\sR^{3\times 3\times 3}$ instead of $\sR^{27}$. The figure shows that the eigenvectors in $\cW_\text{rot}$ are invariant to rotations of 90-degree multiples whereas the eigenvectors in $\cW_\text{col}$ have the same values across the RGB channels, and thus are invariant to permutation of these channels. 
\cref{lem:invNeuron} proves this invariance-property for the 1-eigenspaces of the Reynolds operator of \textbf{any} finite linear automorphism group. }

\paragraph{Step 2: Construct $\cB_M$ for all $M\subseteq \{\text{rot}, \text{col}\}$.}
\begin{figure}
    \centering
    \includegraphics[scale=0.4]{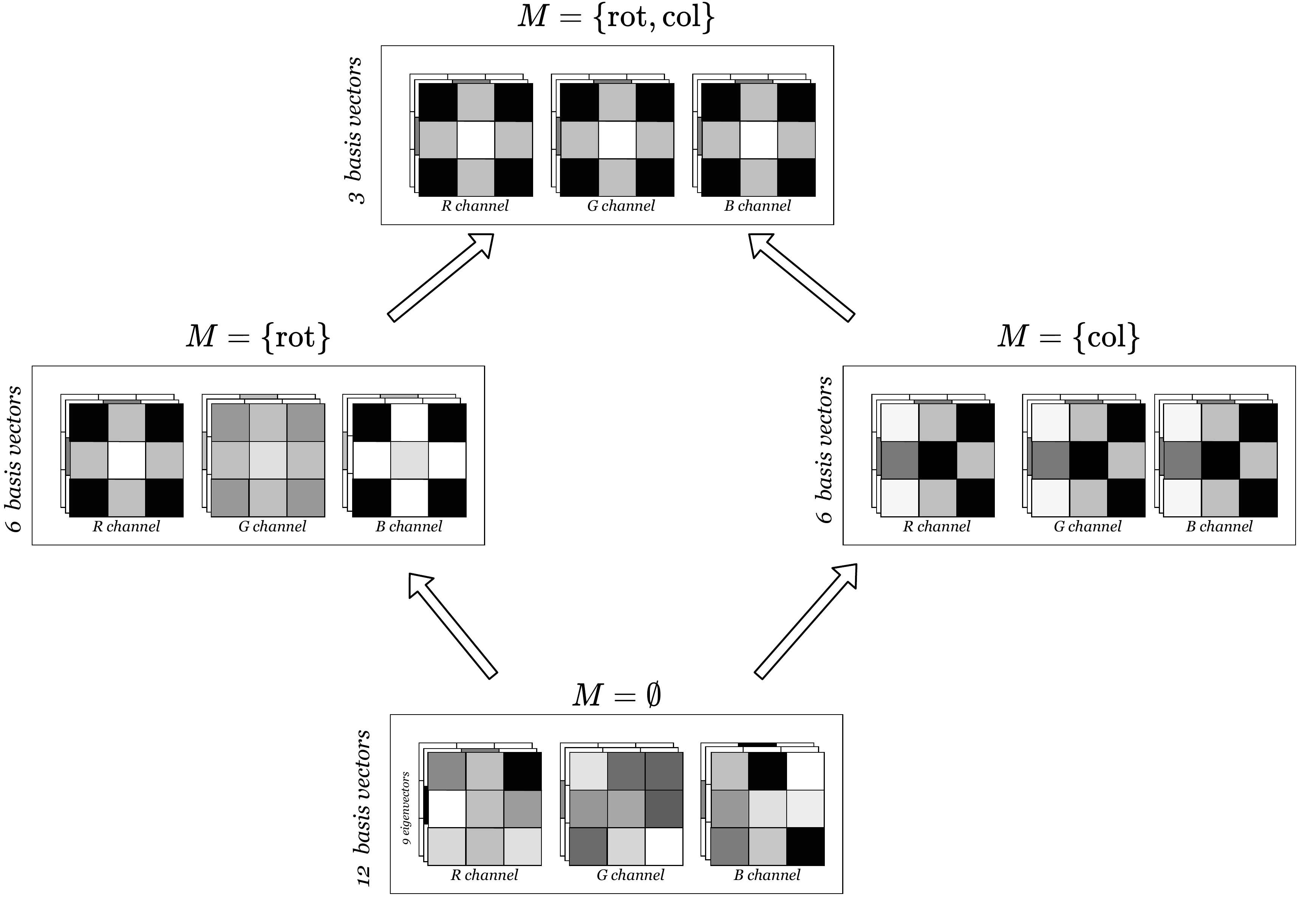}
    \caption{\update{The subspaces $\cB_M$ for all $M \subseteq \{\text{rot}, \text{col}\}$. For instance, $\cB_{\{\text{rot}, \text{col}\}}$ on the top has 3 basis vectors (represented in $\sR^{3 \times 3\times 3}$) and each of these vectors are \textbf{both rotation-invariant and channel-permutation invariant}. On the other hand, $\cB_{\{\text{rot}\}}$ (of dimension 6) is rotation invariant but strictly not channel-permutation invariant. Finally, the vectors in $\cB_\emptyset$ are neither rotation-invariant nor channel-permutation invariant. All the basis vectors together cover the entire space $\sR^{27}$ (i.e., $\text{dim}(\cB_{\{\text{rot}, \text{col}\}}) + \text{dim}(\cB_{\{\text{rot}\}}) + \text{dim}(\cB_{\{\text{col}\}}) + \text{dim}(\cB_\emptyset) = 3 + 6 + 6 + 12 = 27$).}}
    \label{fig:basis_partialorder}
\end{figure}
Now, given $\cW_\text{rot}$ and $\cW_\text{col}$, we will construct basis for the subspaces $\cB_M$ for all $M\subseteq \{\text{rot}, \text{col}\}$ using \cref{lem:partition}. 
\begin{enumerate}
    \item Set $M = \{\text{rot}, \text{col}\}.$
    \begin{align*}
    \widetilde{\cB}_{\{\text{rot}, \text{col}\}} &= \cW_\text{rot} \cap \cW_\text{col} \\ 
    {\cB}_{\{\text{rot}, \text{col}\}} &= \widetilde{\cB}_{\{\text{rot}, \text{col}\}} \tag*{(because $\cB_{\supsetneq \{\text{rot}, \text{col}\}} = \{\vzero\}$)} \:.
    \end{align*}
    The intersection of subspaces $\cW_\text{rot} \cap \cW_\text{col}$ can be computed using standard methods in linear algebra.
    The subspace ${\cB}_{\{\text{rot}, \text{col}\}}$ with 3 basis vectors is visualized in the topmost level of \cref{fig:basis_partialorder}. As before the basis vectors of the subspace are represented in $\sR^{3\times 3\times 3}$. It is clear that the basis vectors are \textbf{invariant to both rotation and permutation of the channels}. This property will hold for any linear combination of the basis vectors, i.e., for any $\vw \in \cB_{\{\text{rot}, \text{col}\}}$.
    
    \item Set $M = \{\text{rot}\}.$
    \begin{align*}
    \widetilde{\cB}_{\{\text{rot}\}} &= \cW_\text{rot} \\ 
    {\cB}_{\{\text{rot}\}} &= \text{orth}_{\cB_{\supsetneq \{\text{rot}\}}}(\widetilde{\cB}_{\{\text{rot}\}}) \\
    &= \text{orth}_{{\cB}_{\{\text{rot}, \text{col}\}}}(\widetilde{\cB}_{\{\text{rot}\}})
    \tag*{(because $\cB_{\supsetneq \{\text{rot}\}} = \cB_{\{\text{rot}, \text{col}\}}$)}
    \end{align*}
    The subspace $\widetilde{\cB}_{\{\text{rot}\}}$ consists of all vectors that are invariant to rotation but also includes vectors that are invariant to both rotation and channel-permutation. Thus, we need to remove from $\widetilde{\cB}_{\{\text{rot}\}}$ the projection of $\widetilde{\cB}_{\{\text{rot}\}}$ on $\cB_{\{\text{rot}, \text{col}\}}$. 
    
    The subspace ${\cB}_{\{\text{rot}\}}$ with 6 basis vectors is visualized in middle level of \cref{fig:basis_partialorder}. It is clear that the basis vectors are \textbf{invariant to rotation but not invariant to channel-permutations}. Again, this property holds for any linear combination of the basis vectors. 
    
    \item Set $M = \{\text{col}\}.$
    \begin{align*}
    \widetilde{\cB}_{\{\text{col}\}} &= \cW_\text{col} \\ 
    {\cB}_{\{\text{col}\}} &= \text{orth}_{\cB_{\supsetneq \{\text{col}\}}}(\widetilde{\cB}_{\{\text{col}\}}) \\
    &= \text{orth}_{{\cB}_{\{\text{rot}, \text{col}\}}}(\widetilde{\cB}_{\{\text{col}\}})
    \tag*{(because $\cB_{\supsetneq \{\text{col}\}} = \cB_{\{\text{rot}, \text{col}\}}$)}
    \end{align*}
    The subspace ${\cB}_{\{\text{col}\}}$ is obtained in a similar fashion. 
    ${\cB}_{\{\text{col}\}}$ has 6 basis vectors and is visualized in middle level of \cref{fig:basis_partialorder}. It is clear that the basis vectors are \textbf{invariant to channel-permutations but not invariant to rotation}. This property holds for any linear combination of the basis vectors. 
    
    \item Set $M = \emptyset.$
    \begin{align*}
    \cB_\emptyset &= \text{orth}_{\cB_{\supsetneq \emptyset}}(\text{vec}(\cX)) \:,
    \end{align*}
    where $\cB_{\supsetneq \emptyset} = {\cB}_{\{\text{rot}, \text{col}\}} \oplus {\cB}_{\{\text{rot}\}} \oplus {\cB}_{\{\text{col}\}}$. 
    The subspace $\cB_\emptyset$ represents the rest of the space that is \textbf{neither rotation-invariant nor channel-permutation-invariant}. This subspace has 12 basis vectors and is visualized in the bottommost level of \cref{fig:basis_partialorder}.
    
\end{enumerate}

Finally, we have $B=4$ subspaces (enumerated above) with a total of 27 basis vectors covering the entire space $\text{vec}(\cX) = \sR^{27}$.

\paragraph{Step 3: Neuron construction.}
For each subspace $\cB_M$, $M\subseteq \{\text{rot}, \text{col}\}$, we denote $\mB_M$ as the corresponding matrix with columns as the basis vectors of the subspace $\cB_M$. As described above any linear combination of the basis vectors of $\cB_M$ are invariant to all groups indexed by $M$ and nothing more (e.g., $\cB_{\{\text{rot}\}}$ consists of vectors invariant to rotation but not invariant to channel-permutation). 

In the following, we consider a single neuron and drop the subscript $h$ from $\bomega_{M, h}$ (where $h$ represented the $h$-th neuron in \cref{eq:neuron}). 
Recall that $\bomega_{M} \in \mathbb{R}^{d_M}$ are the learnable parameters of the neuron corresponding to each basis vector of the subspace $\cB_M$, and $d_M$ is the dimension of the subspace $\cB_M$.
Then, $\bomega_{\{\text{rot}, \text{col}\}} \in \sR^3$ represents the coefficients in the linear combination of the basis vectors in $\mB_{\{\text{rot}, \text{col}\}}$. The linear combination is given by the matrix-vector product $\mB_{\{\text{rot}, \text{col}\}} \bomega_{\{\text{rot}, \text{col}\}}$. 
Similarly, $\bomega_{\{\text{rot}\}} \in \sR^6$, $\bomega_{\{\text{col}\}} \in \sR^6$, $\bomega_\emptyset \in \sR^{12}$ represent the coefficients of the basis vectors in the columns of $\mB_{\{\text{rot}\}}$, $\mB_{\{\text{col}\}}$ and $\mB_\emptyset$ respectively.

Then, a CG-invariant neuron is given by,
\begin{align*}
    &\Gamma(\vx) = \vx^T \vw + b \:, \\
    &\text{where} \\
    &\vw = \mB_{\{\text{rot}, \text{col}\}} \bomega_{\{\text{rot}, \text{col}\}} + \mB_{\{\text{rot}\}} \bomega_{\{\text{rot}\}} + \mB_{\{\text{col}\}} \bomega_{\{\text{col}\}} + \mB_\emptyset \bomega_\emptyset
    \:,
\end{align*}
and $\bomega_{\{\text{rot}, \text{col}\}}$, $\bomega_{\{\text{rot}\}}$, $\bomega_{\{\text{col}\}}$, $\bomega_\emptyset$, $b \in \sR$ are the only learnable parameters. The total number of parameters is 28, same as that of the standard neuron with input $\vx \in \sR^{27}$. 

Now, if for example the optimization finds $\bomega_{\{\text{rot}, \text{col}\}} \neq \vzero$, $\bomega_{\{\text{rot}\}} = \vzero$, 
$\bomega_{\{\text{col}\}} = \vzero$ and $\bomega_\emptyset = \vzero$, then the neuron $\Gamma(\cdot)$ is invariant to both rotation and channel-permutation. 

Our regularization in \cref{eq:constrained} forces the optimization to find maximum invariance as long as training performance is unaffected. A more comprehensive example of the computation of the penalty is given in \cref{appx:penalty}.

\section{Pseudocode for \cref{lem:partition}} \label{appx:pseudocode}
We present the algorithm for \cref{lem:partition} in \cref{alg:partition}. 
The loops in the algorithm iterate over the different subsets $M \subseteq \{1,\ldots, m\}$ in descending order of their sizes. The worst-case complexity of the algorithm is exponential in $m$ (to iterate over all subsets). However, since the algorithm stops after finding all the basis for the space $\text{vec}(\cX)$, it is unclear if the worst-case runtime occurs in practice. Moreover, the algorithm only needs to run once for a given collection of groups and the results can be reused in all experiments.

\begin{algorithm}[H] 
\SetAlgoLined
\SetKwInOut{Input}{Input}
\caption{Procedure to construct basis for the subspaces $\cB_M$ of \cref{lem:partition}.\label{alg:partition}}

\Input{~Left 1-eigenspaces of the Reynolds operator $\cW_1, \cW_2, \ldots, \cW_m$ for groups $\cG_1, \cG_2, \ldots, \cG_m$ respectively.}
\KwResult{Basis for nonzero subspaces $\cB_{M_1}, \cB_{M_2}, \ldots, \cB_{M_B}$, with $B \leq d_\cX$ and $M_i \subseteq \{1,\ldots,m\}$.}
 
 \tcp{Initialization}
 $l \gets m$ \;
 $\cC \leftarrow \emptyset$ \;
 $k \leftarrow 1$ \tcp*{A counter for the subspaces.} 
 \While{$l \geq 0$}{
    \tcc{$l$ will denote the size of subsets $M\subseteq \{1, \ldots, m\}$, denoting the level of invariance.}
    $\mathcal{P}_l \gets \{M: |M| = l, ~M\subseteq \{1, \ldots, m\}\}$ \;
    
    \For{$M$ in $\mathcal{P}_l$}{
        \uIf{$M \neq \emptyset$}{
            $\widetilde{\cB}_M \leftarrow \cap_{i \in M} \cW_i$ \tcc*{Intersection of 1-eigenspaces.} 
        } \Else {
            $\widetilde{\cB}_M \leftarrow \text{vec}(\cX)$ \tcc*{Used to find the subspace $\cB_\emptyset$.} 
        }
    
        \tcp{Direct sum of subspaces of supersets of M.} 
        $\mathcal{B}_{\supsetneq M} = \bigoplus_{N \supsetneq M} \mathcal{B}_N$ \;
        ${\cB}_M \leftarrow \text{orth}_{\cB_{\supsetneq M}}(\widetilde{\cB}_M)$ \;
        \If{${\cB}_M \neq \vzero$}{
            $M_k \leftarrow M$       \tcc*{Record current subspace to return}
            $k \leftarrow k+1$ \;
        }
        $\cC \leftarrow \cC \oplus \cB_M$ \;   
        \If{$\text{dim}(\cC) = \text{dim}(\text{vec}(\cX))$}{
            break while   \tcc*{Found basis for the entire space.}
        }
    }
    $l \leftarrow l - 1$
 }

    $B \leftarrow k - 1$ \tcc*{Number of subspaces.}
    \Return{$\cB_{M_1}, \cB_{M_2}, \ldots, \cB_{M_B}$} \;
 
\end{algorithm}

\section{Architectures} \label{appx:architectures}
\subsection{Images} \label{appx:architectures_images}
\begin{figure}[t]
    \centering
    \includegraphics[scale=0.5]{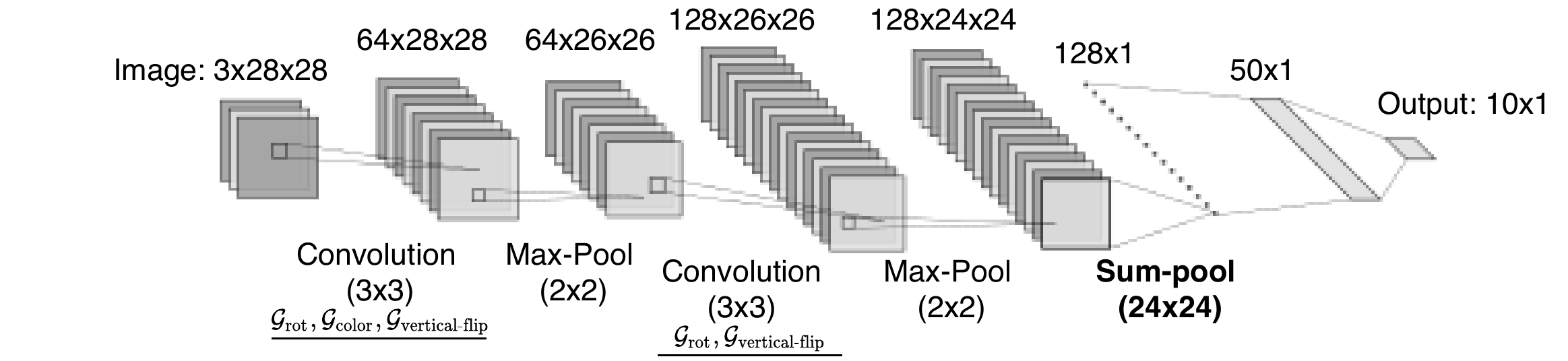}
    \caption{An example architecture of CG-invariant CNN architecture.}
    \label{fig:cfgcnn}
\end{figure}
An example CG-invariant CNN architecture is depicted in \cref{fig:cfgcnn}. 
Majority of the CNN architecture remains the same with the exception that the filters are obtained using the bases of the subspaces obtained in \cref{lem:partition} for the given set of groups. 
\cref{fig:basis_partialorder} shows example subspaces along with their basis vectors when the groups are just $\cG_\text{rot}$ and $\cG_\text{color}$, and the kernel size is $3\times 3$ applied over an input with 3 channels. One can similarly obtain these subspaces for other groups, different kernel sizes and different number of input channels. Then, the filter is obtained as a linear combination of these basis vectors, where the coefficients form the learnable parameters. The G-invariance of the filter then depends upon which of these coefficients are nonzero. 
Once the filter is obtained, it is convolved with the image or the feature maps. This will ensure that the model can be CG-invariant to transformations of {\em smaller patches} in the image if needed.

Max-pooling layers function in the standard way.
After all the convolutional and max-pooling layers, we use a sum-pooling layer over the entire channel to ensure that the model can be invariant to the transformations (e.g., rotations) on the {\em whole image} if needed.
Finally, any number of dense layers can be added after the sum-pooling layer.

In our experiments, we use the three groups $\cG_\text{rot}$, $\cG_\text{color}$ and $\cG_\text{vertical-flip}$ to construct the subspaces for the filters of the first convolutional layer, but remove $\cG_\text{color}$ in the further layers as we do not wish to be invariant to channel permutation after the first layer.

\subsection{Sequences} \label{appx:architectures_sequences}
\begin{figure}[t]
    \centering
    \includegraphics[scale=0.5]{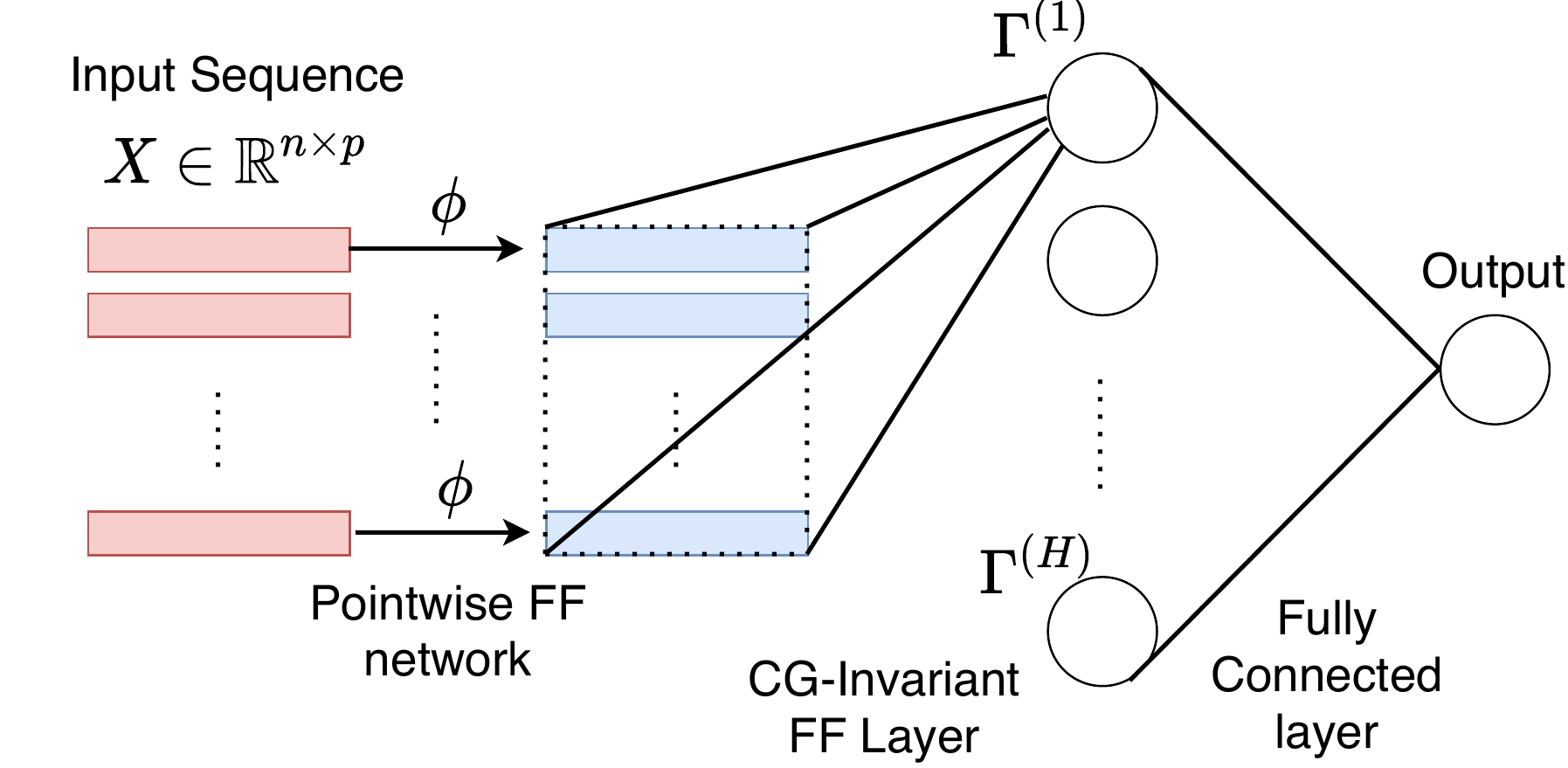}
    \caption{An example architecture of CG-invariant feedforward network.}
    \label{fig:cfgff}
\end{figure}
A CG-invariant architecture for sequences is depicted in \cref{fig:cfgff}.
Consider a sequence $\mX = [ \vx_1,\ldots,\vx_n ] \in \sR^{p\times n}$ of length $n$ and groups $\cG_1, \ldots, \cG_m$ as before. 
In the following discussion, we will assume that the groups are permutation groups over the sequence elements. However, one could also consider other groups over $\mX$. 

First, each element of the sequence is passed through a shared feedforward network $\phi$ that returns a representation $\mZ \in \sR^{p' \times n}$.
Then, \Cref{lem:partition} finds the bases for $\cB_M$, $M \subseteq \{1,\ldots,m\}$ until all the $p'n$ basis vectors are found covering the space $\sR^{p'\times n}$. 
The weight vectors for the $h$-th neuron of the CG-invariant layer is obtained as a linear combination of these basis vectors via the learnable parameters $\Omega$ (\cref{eq:neuron}). 
Finally, any number of dense layers can be stacked after the CG-invariant layer for the final output.

\section{Regularization} \label{appx:penalty}
\subsection{Example}
\cref{fig:penalty} shows an example computation of the penalty in \cref{eq:penalty}. The example considers an image task with $m=3$ groups: $\cG_\text{rot}, \cG_\text{col}, \cG_\text{vflip}$. Each cell in the figure shows one subset $M \subseteq \{\text{rot}, \text{col}, \text{vflip}\}$. The subsets are arranged according to their levels of invariance, i.e., by the size of $|M|$. For example, the topmost cell $\{\text{rot}, \text{col}, \text{vflip}\}$ denotes the subspace with all the invariances whereas the bottommost cell $\emptyset$ denotes the subspace with no invariance. 

The colors indicate the state of the parameters $\bOmega$ at a single point in the optimization. The cells are colored green or red depending on whether the subspace is used or unused respectively, i.e., whether the parameters corresponding to the subspace are nonzero or not. The least invariant subspaces used at this point are in {\bf Level 1} (i.e., invariant to a single group). The penalty counts {\bf (a)} all subspaces with higher levels of invariance irrespective of whether the subspace is used or not, and {\bf (b)} counts all the used subspaces with the same level of invariance. 
The former penalizes the use of subspaces lower in the partial order and  ensures that subspaces with higher levels of invariance are used. The latter approximates the effort to reach a higher level of invariance.

\subsection{Differentiable Approximation}
Recall that the regularization penalty $R(\bOmega)$ in \cref{eq:penalty} is given by,
\begin{align} \label{eq:appx_penalty}
    R(\bOmega) = f_l(\bOmega) := \vert \{M_i:|M_i|>l, ~1\leq i \leq B\} \vert ~+~ \sum_{\substack{i:|M_i|=l \\ 1\leq i \leq B}} \vone\{\Vert \bomega_{M_i,\cdot} \Vert^2_2 > 0\} \:,
\end{align} 
where $l = \min\{|M_i| \cdot \vone\{\Vert \bomega_{M_i,\cdot} \Vert^2_2 > 0\}, ~~1 \leq i \leq B\}$. 

$R(\bOmega)$ is clearly discrete but can be approximated by a differentiable formula. 
First, we replace the indicator function $\vone\{z > 0\}$ in \Cref{eq:appx_penalty} with the approximation $\tilde{\vone}\{z > 0\} = \tau z/(\tau z+1)$, where $\tau \geq 1$ is a temperature hyperparameter. 

Then, in order to obtain $R(\bOmega) = f_l(\bOmega)$ for the minimum $l$ defined in \cref{eq:appx_penalty}, we use the following recursion: $R(\bOmega) = R_m(\bOmega)$, and
$$
R_l(\bOmega) =  (1-\beta_l(\bOmega)) \cdot R_{l-1}(\bOmega) + f_l(\bOmega) \beta_l(\bOmega) \quad l = 1,\ldots,m \:,
$$ 
with the base case $R_0(\bOmega) = 0$, and $\beta_l(\bOmega) = \tilde{\vone}\{\sum_{N_i: |N_i|=l, ~1\leq i\leq B} \Vert \bomega_{N_i, \cdot} \Vert^2_2 > 0\}$. $\beta_l(\bOmega)$ is approximately one if at least one neuron $h$ has nonzero $\bomega_{N_i,h}$ parameters for some $N_i \subseteq \{1,\ldots,m\}$ of size $l$ (i.e., with $l$ groups). Then the recursion finds $f_l(\bOmega)$ with $l$ defined as the size of the least invariant subspace used.

\begin{figure}
    \centering
    \includegraphics[scale=0.5]{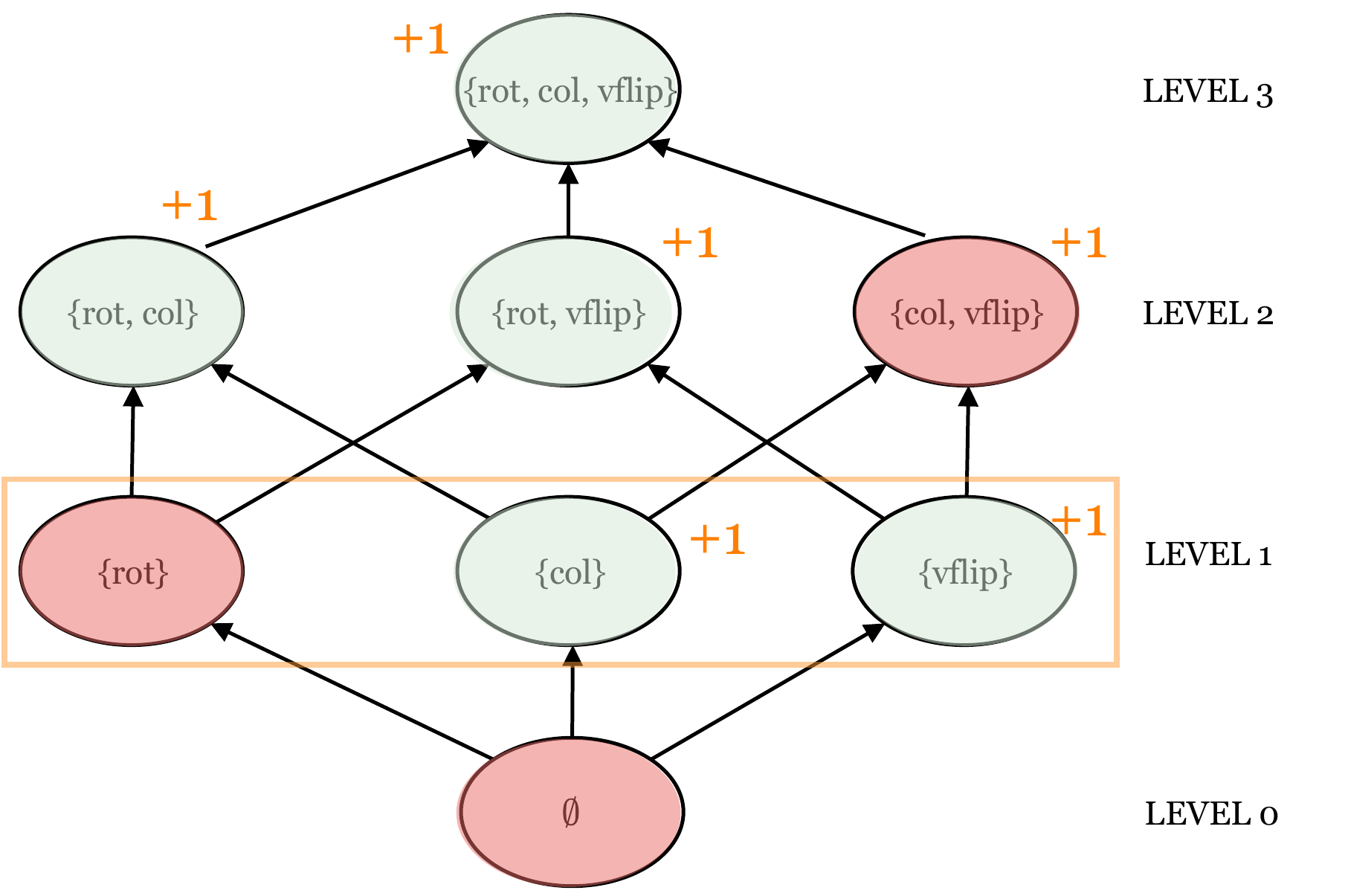}
    \caption{(Best viewed in color) Describing the computation of the penalty. The cells denote different subsets $M \subseteq \{\text{rot}, \text{col}, \text{vflip}\}$. Red colored cells denote that the parameters corresponding to these subspaces are zero (i.e., the subspaces are unused) and the green colored cells denote otherwise (i.e., the subspaces are used). In this example, the least invariant subspaces used are in Level 1. The penalty counts all the subspaces (used or unused) that are in higher levels (i.e., with $\vert M \vert > 1$) and adds it to the number of subspaces of the same level that are used.}
    \label{fig:penalty}
\end{figure}

\subsection{Limitation of $R(\Omega)$} \label{appx:drawback}
\begin{figure}
    \centering
    \includegraphics[scale=0.5]{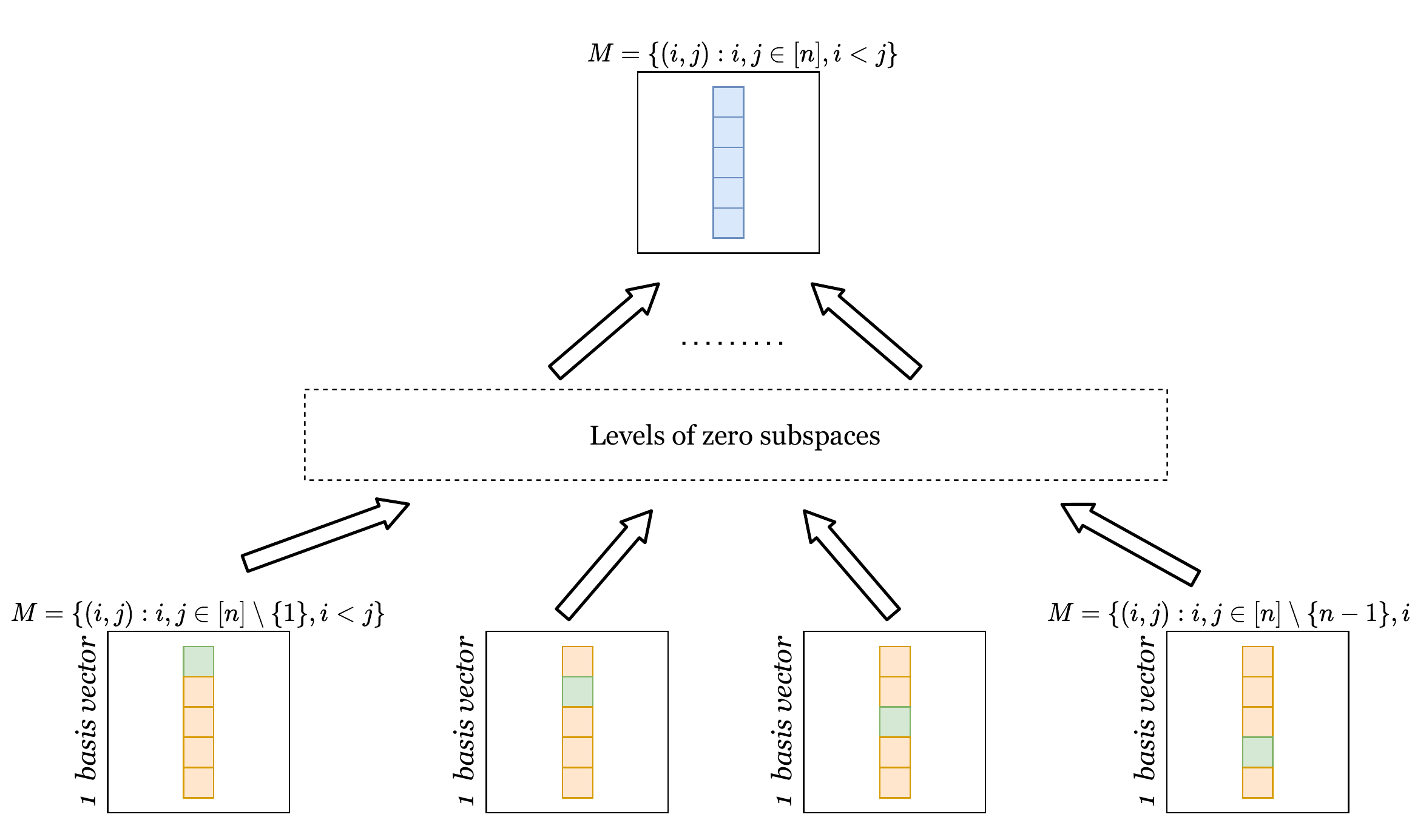}
    \caption{(Best viewed in color) The subspaces $\mathcal{B}_M$ for different $M \subseteq \{(i, j)\}_{1\leq i < j \leq n}$ indexing the $m=\binom{n}{2}$ transposition groups $\mathcal{G}_{i, j}$ over sequences of length $n=5$ and dimension $d=1$. Each of the subspaces $\mathcal{B}_M$ is of dimension 1. For each basis vector shown above, elements sharing the same color have the same value. At the topmost level, we have the subspace with most invariance, i.e., \textbf{invariant to the full permutation group $\mathbb{S}_n$}. Following many levels with empty subspaces, we have subspaces $\mathcal{B}_{M_p}$ for $M_p = \{(i, j) ~|~ i, j \in [n] \setminus \{p\}, i < j\}$, where $[n] = \{1, \ldots, n\}$. In other words, the subspace $\mathcal{B}_{M_p}$ \textbf{is invariant to all transpositions except those that move index $p$}. Note that we have covered the entire space $\mathbb{R}^n$ with these $n$ independent subspaces of dimension 1.}
    \label{fig:basis_sequence}
\end{figure}
As explained in \cref{subsec:learning_framework}, there could be overgroups (out of the total $2^m$ groups considered) with different levels of invariance, but penalized similarly by \cref{eq:penalty}. This scenario arises only in cases when \cref{lem:partition} does not construct subspace basis for all the $2^m$ overgroups, i.e., the basis for $\text{vec}(\cX)$ is found prior to that. 
In this section, we provide such an example scenario with sequence inputs and the transposition groups considered in \cref{sec:results}.

Let $\obsX \in \cX = \sR^n$ be a 1-dimensional sequence of length $n$. 
The transposition groups are $\{\cG_{i, j}\}_{1\leq i<j\leq n}$, where $\cG_{i, j} = \{T_\text{identity}, T_{i, j}\}$ and $T_{i, j}$ swaps positions $i$ and $j$ in the sequence. 
Given these $m = \binom{n}{2}$ groups, we can use \cref{lem:Tbar,lem:invNeuron}, and \cref{lem:partition} to find the invariant subspaces $\cB_{M}$ for subsets $M \subseteq \{(i, j) ~|~ 1\leq i < j \leq n\}$ indexing the transposition groups. 
The basis vectors for these subspaces constructed for sequence length $n=5$ are visualized in \cref{fig:basis_sequence}. 

There are $n$ 1-dimensional subspaces. Let the vectors $\vb_{\setminus\emptyset}, \vb_{\setminus\{1\}} \ldots \vb_{\setminus\{n-1\}}$ denote these $n$ basis vectors. 
The notation $\setminus A$ means that the vector has the same value for all positions $k \in \{1,\ldots,n\}\setminus A$ (cf. \cref{fig:basis_sequence}). 
Let $n=5$ and note that any weight vector $\bomega \in \mathbb{R}^5$ can be written as,
\begin{align} \label{eq:sequence_basis_alpha}
  \bomega = \alpha_1 \vb_{\setminus\emptyset} + \alpha_2 \vb_{\setminus\{1\}} + \alpha_3 \vb_{\setminus\{2\}} + \alpha_4 \vb_{\setminus\{3\}} + \alpha_5 \vb_{\setminus\{4\}}  \:. 
\end{align}
where $\bm{\alpha} \in \mathbb{R}^5$.

Let $\bm{\alpha}' = (1, 0, 1, 0, 1)^T$. From a quick read of \cref{fig:basis_sequence}, we see that the weight $\bomega'$ obtained by substituting $\bm{\alpha'}$ in \cref{eq:sequence_basis_alpha} is such that $\omega'_1 = \omega'_3 = \omega'_5$ and $\omega'_2 = w'_4$. For any input $\vx \in \mathbb{R}^5$, the neuron $\sigma(\bomega'^T \vx + b)$ is invariant to any permutation of $x_1, x_3$ and $x_5$, and, transposition of $x_2$ and $x_4$. The penalty $R(\bomega') = 3$ as there are 2 subspaces used at the lowest level and there is 1 subspace above the lowest level (see \cref{eq:penalty}).

Now let $\bm{\alpha}'' = (1, 0, 1, 0, 1.5)^T$. The weight $\bomega''$ obtained by substituting $\alpha''$ in \cref{eq:sequence_basis_alpha} is such that $\omega''_1 = \omega''_3 = \omega''_5$ but $\omega''_2 \neq \omega''_4$. For input $\vx \in \mathbb{R}^5$, the neuron $\sigma(\bomega''^T \vx + b)$ is invariant to any permutation of $x''_1, x''_3$ and $x''_5$, but \emph{sensitive} to the transposition of $x_2$ and $x_4$. The penalty $R(\bomega'') = 3$ as the same subspaces are used as before. 

In the first case, with all the parameters being equal (especially $\alpha'_3 = \alpha'_5$), $\bomega'$ lies in a smaller (more invariant) subspace of $\text{span}(\vb_{\setminus\emptyset}, \vb_{\setminus \{2\}}, \vb_{\setminus \{4\}})$. 
In the second case, since $\alpha''_3 \neq \alpha''_5$, the same does not hold for $\bomega''$. 
The penalty $R(\cdot)$, which only counts the subspaces used (in this case, $\vb_{\setminus\emptyset}, \vb_{\setminus \{2\}}$ and $\vb_{\setminus \{4\}}$), 
is unable to distinguish between these two weight vectors $\bomega'$ and $\bomega''$, one clearly more invariant than the other.

In this specific case with transposition groups over sequences, one could add another penalty term that regularizes the parameters $\alpha_i$ to share the same value (e.g., entropy regularization of the parameters). We leave further investigation into the general scenario with other groups for future work.

\section{Datasets and Empirical Results}\label{appx:results}

\subsection{Images}
\paragraph{Datasets.}
We consider the standard MNIST dataset and its subset MNIST-34 that contains only the digits 3 \& 4 alone. We chose to experiment on the MNIST-34 dataset since it does not have digits that can be confused with a rotation transformation (e.g., 6 and 9) or are invariant to some rotations (e.g., 0, 1  and 8), thus avoiding any confounding factors while testing our hypothesis. 
We also experiment on the full MNIST dataset to depict the scenario when the data does contain these contradictions. 
First, we modify all the images in the dataset to have three RGB color channels and color each digit {\em red} initially, i.e., all active pixels in the digit are set to (255, 0, 0). We sample $X^\text{(hid)}$ from this dataset with the target digit as its original label.

{\bf Groups.} We consider $m=3$ linear automorphism groups on images: the rotation group $\cG_\text{rot} = \{T^{(0^\circ)}, T^{(90^\circ)}, T^{(180^\circ)}, T^{(270^\circ)}\}$ that rotates the entire image by multiples of $90^\circ$, the channel-permutation group $\cG_\text{color} = \{T^{\alpha}\}_{\alpha\in\sS_3}$ that permutes the three RGB channels of the image, and the vertical flip group $\cG_\text{vertical-flip} = \{T^{(0)}, T^{(v)}\}$ that vertically flips the image.

\paragraph{Tasks.}
For both MNIST and MNIST-34 datasets, we consider 4 classification tasks where each task represents the case when the target $Y$ is invariant to a different subset of $\{\cG_\text{rot},  \cG_\text{vertical-flip}, \cG_\text{color}\}$, i.e., invariant to all three groups, to two, to one, invariant to none (and sensitive to the remaining groups).
We consider the following subsets $\Sbar$: {\bf i)} $\{\text{rot}, \text{color}, \text{vertical-flip}\}$, {\bf ii)} $\{\text{rot}, \text{vertical-flip}\}$, {\bf iii)} $\{\text{color}\}$, {\bf iv)} $\emptyset$, and generate $\cG_\Sbar = \langle \cup_{i \in \Sbar} \cG_i \rangle$ as the \textit{join} of the respective groups. Setting $\Sdep = \{\text{rot}, \text{color}, \text{vertical-flip}\} \setminus \Sbar$, we generate $\cG_\Sdep = \langle \cup_{j \in \Sdep} \cG_j \rangle$ from the join of groups in the complement set
(our choices ensure that $\cG_\Sbar \trianglelefteq \overgroup$, thus satisfying the conditions of \cref{thm:normalsubgroup}).

{\em Training data:} $X^\text{(hid)}$ is the canonically ordered (standard) image in the MNIST datasets. Recall that the training data is sampled via an economical data generation process. Thus the training data consists only of images under transformations that have an effect on the label, i.e., transformations from $\cG_\Sdep$. 

Recall from \cref{eq:Xobs} that the observed input is obtained as $X = T_{U_\Sbar, U_\Sdep} \circ X^\text{(hid)}$, a transformation of the canonical input $X^\text{(hid)}$. Since $\cG_\Sbar \trianglelefteq \cG_\Sdep$ (by construction), we have that any $T_{U_\Sbar, U_\Sdep} = T_{\noise'_\Sbar | \noise_\Sdep} \circ T_{\noise_\Sdep}$, i.e., the transformation can be decomposed into one transformation from $\cG_\Sdep$ followed by another transformation from $\cG_\Sbar$. 
$U'_\Sbar ~|~ U_\Sdep$ in the subscript indicates that the transformation $T_{\noise'_\Sbar | \noise_\Sdep} \in \cG_\Sbar$ also depends on $U_\Sdep$. 
Under the assumption of economic sampling of training data, in all our experiments we sample a single value for $T_{\noise'_\Sbar | \noise_\Sdep} \in \cG_\Sbar$: we simply use $T_{\noise'_\Sbar | \noise_\Sdep} = T_\text{identity}$ (one could consider any other transformation in $\cG_\Sbar$ as well).

In conclusion, we obtain the observed image $X$ in the training data by applying a random transformation from $\cG_\Sdep$ to $X^\text{(hid)}$ and then applying a constant transformation (e.g., $T_\text{identity}$) from $\cG_\Sbar$ to the result.
The task is to predict the original label of the image (i.e., the digit) and the transformation $T_{U_\Sdep}$ that was applied to obtain $X$ (recall from \cref{eq:Y} that $Y$ is a function of both $X^\text{(hid)}$ and $U_\Sdep$).

For instance, if $\cG_\Sbar = \cG_\text{rot, vertical-flip}$ and $\cG_\Sdep = \cG_\text{color}$, then the training data consists of \textbf{upright} and \textbf{unflipped} images (as $T_{\noise'_\Sbar | \noise_\Sdep}$ is chosen to be identity transformation) with different permutations of the color channels (since random transformations are sampled from $\cG_\Sdep$) resulting in digits with different colors. 
Then, the task is to predict the digit and its color.

\begin{table}
\centering
\caption{{\bf (MNIST-34.)} Validation and Extrapolation test accuracies (\%) with 95\% confidence intervals for different CG-regularization strength $\lambda$ in \cref{eq:constrained}. $\lambda$ is chosen only based on the validation accuracy: maximum $\lambda$ with validation accuracy within 5\% of the best validation accuracy ({\bf bold} values indicate the performance of this choice of $\lambda$).}
\label{tab:mnist34lambda}
\resizebox{\textwidth}{!}{
\begin{tabular}{llllllllll}
\toprule
& & \multicolumn{8}{c}{$\Sbar \subseteq \{\text{rot}, \text{color}, \text{vflip}\}$} \\
& &    \multicolumn{2}{c}{rot,color,vflip} & \multicolumn{2}{c}{rot,vflip} & \multicolumn{2}{c}{color} & \multicolumn{2}{c}{$\emptyset$}    \\
 Model & $\lambda$ & Val. acc (\%) & Test acc (\%) & Val. acc (\%) & Test acc (\%) & Val. acc (\%) & Test acc (\%) & Val. acc (\%) & Test acc (\%)   \\
\midrule
VGG + CG-reg & 0.0  &             99.94 ( 0.17) &                49.51 ( 2.36) &             99.83 ( 0.09) &                43.57 ( 3.69) &             97.15 ( 0.27) &                15.71 ( 5.46) &             95.65 ( 0.39) &                96.30 ( 0.68) \\
       & 0.1  &             99.92 ( 0.09) &                78.72 (25.75) &             99.86 ( 0.21) &                73.98 (16.33) &             96.48 ( 0.77) &                96.27 ( 1.01) &             94.95 ( 0.54) &                95.56 ( 0.61) \\
       & 1.0  &             99.71 ( 0.22) &                85.42 (29.66) &             99.77 ( 0.25) &                75.64 (20.52) &             96.12 ( 1.26) &                96.20 ( 1.11) &             94.01 ( 1.51) &                94.42 ( 1.38) \\
       & 2.0  &             99.55 ( 0.33) &                94.88 ( 0.84) &             99.56 ( 0.66) &                82.59 (28.92) &             95.23 ( 0.99) &                95.61 ( 1.75) &             {\bf 94.05 ( 1.50)} &                {\bf 94.49 ( 1.49)} \\
       & 10.0 &             {\bf 99.00 ( 1.18)} &                {\bf 94.89 ( 7.49)} &             {\bf 98.43 ( 2.00)} &                {\bf 95.78 ( 7.11)} &             {\bf 93.34 ( 8.42)} &                {\bf 94.16 ( 6.43)} &             88.42 (19.36) &                88.68 (20.13) \\
\bottomrule
\end{tabular}
}
\end{table}

\begin{table}
\centering
\caption{{\bf (MNIST.)} Validation and Extrapolation test accuracies (\%) with 95\% confidence intervals for different CG-regularization strength $\lambda$ in \cref{eq:constrained}. $\lambda$ is chosen only based on the validation accuracy: maximum $\lambda$ with validation accuracy within 5\% of the best validation accuracy ({\bf bold} values indicate the performance of this choice of $\lambda$).}
\label{tab:mnistlambda}
\resizebox{\textwidth}{!}{
\begin{tabular}{llllllllll}
\toprule
& & \multicolumn{8}{c}{$\Sbar \subseteq \{\text{rot}, \text{color}, \text{vflip}\}$} \\
& &    \multicolumn{2}{c}{rot,color,vflip} & \multicolumn{2}{c}{rot,vflip} & \multicolumn{2}{c}{color} & \multicolumn{2}{c}{$\emptyset$}    \\
 Model & $\lambda$ & Val. acc (\%) & Test acc (\%) & Val. acc (\%) & Test acc (\%) & Val. acc (\%) & Test acc (\%) & Val. acc (\%) & Test acc (\%)   \\
\midrule
VGG + CG-reg & 0.0   &             99.17 ( 0.17) &                11.93 ( 1.87) &             98.80 ( 0.14) &                25.81 ( 0.92) &             91.50 ( 0.35) &                 4.19 ( 2.12) &             91.80 ( 0.60) &                91.60 ( 0.32) \\
       & 0.1   &             98.62 ( 0.05) &                29.48 ( 0.98) &             98.34 ( 0.17) &                30.12 ( 4.05) &             90.11 ( 0.58) &                87.23 ( 3.68) &             88.30 ( 1.21) &                88.48 ( 1.17) \\
       & 1.0   &             98.49 ( 0.24) &                44.23 (15.45) &             98.29 ( 0.21) &                40.13 ( 4.83) &             90.24 ( 0.46) &                90.23 ( 0.99) &             88.76 ( 1.28) &                88.65 ( 1.30) \\
       & 2.0   &             98.45 ( 0.13) &                55.24 ( 2.29) &             98.34 ( 0.34) &                47.14 (15.17) &             89.98 ( 0.24) &                89.71 ( 0.89) &             89.50 ( 1.35) &                89.45 ( 1.43) \\
       & 10.0  &             \textbf{97.76 ( 0.74)} &                \textbf{64.99 ( 2.76)} &             \textbf{95.21 ( 6.55)} &                \textbf{62.68 ( 6.02)} &             \textbf{88.80 ( 2.11)} &                \textbf{88.69 ( 2.11)} &             \textbf{90.54 ( 1.04)} &                \textbf{90.89 ( 0.43)} \\
\bottomrule
\end{tabular}
}
\end{table}

{\em Extrapolation task:} The extrapolated test data consists of samples from the coupled random variable $\cfX$ (\Cref{def:coupling}). 
Unlike the training data that was economically sampled (i.e., with a single transformation from $\cG_\Sbar$), the extrapolated test data is obtained via the full range of transformations in $\cG_\Sbar$. 
Recall from \cref{def:coupling} that $\cfX = T_{\widetilde{\noise}_\Sbar,\noise_\Sdep} \circ X^\text{(hid)}$. 
As before, we decompose $T_{\widetilde{\noise}_\Sbar,\noise_\Sdep} = T_{\widetilde{\noise}'_\Sbar | \noise_\Sdep} \circ T_{\noise_\Sdep}$. However, there is no economic sampling for the test data: $T_{\widetilde{\noise}'_\Sbar | \noise_\Sdep}$ and $T_{\noise_\Sdep}$ are sampled randomly from $\cG_\Sbar$ and $\cG_\Sdep$ respectively.

In conclusion, we obtain the observed image $X$ in the test data by applying a random transformation from $\cG_\Sdep$ to $X^\text{(hid)}$ and then applying a random transformation from $\cG_\Sbar$ to the result. 
The task is the same as in the training data: to predict the original label of the image (i.e., the digit) and the transformation $T_{U_\Sdep}$ that was applied to obtain $X$. Note that the label does not depend on the transformation $T_{\widetilde{\noise}'_\Sbar | \noise_\Sdep} \in \cG_\Sbar$ that was applied. 

Once again, if $\cG_\Sbar = \cG_\text{rot, vertical-flip}$ and $\cG_\Sdep = \cG_\text{color}$, then the extrapolated test data consists of images randomly \textbf{rotated}, \textbf{flipped} and \textbf{channel permuted}, while the task is the same: predict the digit and its color. 

In order to evaluate the models, we use 5-fold cross-validation procedure as follows. We divide the training and test datasets that are pre-split in MNIST and MNIST-34 datasets into 5 folds each. We use the above procedure to transform the training data and the test data. Then in each iteration $i$ of the cross-validation procedure, we leave out $i$-th fold of the transformed training data and $i$-th fold of the extrapolated test data. Further, we use 20\% of the training data as validation data for hyperparameter tuning and early stopping.

\paragraph{Baselines and Architecture.}
For all methods, we use a VGG architecture \citep{simonyan2014very} with 8 convolutional layers each having 128 channels except the first layer which has 64 channels. All convolutional layers have a receptive field of size $3 \times 3$, stride 1 and padding 1. A max-pooling layer is added after every two convolutional layers. Two feedforward layers at the end give the final output.
We compare our approach with the standard CNNs and Group-equivariant CNNs (G-CNNs) \citep{cohen2016} with the $p4m$ group. 
We modified G-CNN such that it has invariances to all the 3 groups strictly enforced via a) coset-pooling~\citep{cohen2016} after each layer and b) adding together the 3 input RGB channels.
For our approach, we replace the standard convolutional layer in the VGG architecture by CG-invariant layers with bases constructed from $\cG_\text{rot}, \cG_\text{color}$ and $\cG_\text{vertical-flip}$. An example architecture with only 2 convolutional layers is shown in \cref{fig:cfgcnn}.

We optimize all models using SGD with momentum with learning rate in $\{10^{-2}, 10^{-3}, 10^{-4}\}$ and a batch size of 64. We use early stopping on validation loss to select the best model. Further, we use validation loss to select the best set of hyperparameters for each model. 
We choose the maximum value of $\lambda$ with validation accuracy within a 5\% threshold of the maximum validation accuracy obtained from any value of $\lambda$. 
\cref{tab:mnist34lambda,tab:mnistlambda} show the effect of regularization strength on the performance of the model. We observe that $\lambda=10$ performs considerably well across all tasks.

\subsection{Sequences}

\paragraph{Datasets.}
For sequence tasks, we generate $X^\text{(hid)} = (X_i)_{i=1}^{10}$ as a sequence of $n=10$ canonically ordered integers uniformly sampled with replacement from a fixed vocabulary set $\{1,\ldots,99\}$. The canonical ordering is fixed for a given set of integers sampled: the corresponding sequence $X^\text{(hid)}$ is always either in an increasing order or a decreasing order.

\begin{table}[t!]
    \centering
    \caption{Sequence tasks. The first column defines the target $Y$ for a given sequence $(X_i)_{i=1}^{10}$. The second column denotes $\cG_\Sbar$, the group of transformations to which $Y$ is invariant. Recall that $\cG_\Sbar$ is constructed as the join of a subset of $\binom{10}{2}$ transposition groups. }
    \label{tab:sequence_tasks}
        \begin{tabular}{lll}
            \toprule
            Target $Y$ & $\cG_\Sbar$ & $\cG_\Sdep$ \\
            \midrule
            $Y_\text{task-1} = \sum_{i=1}^{10} X_i$ & $\langle \{\cG_{i,j}\}_{1\leq i < j \leq n} \rangle$ & $\{\text{Id}\}$ \\ 
            $Y_\text{task-2} = \sum_{i=2}^{10} X_i$ & $\langle \{\cG_{i,j}\}_{2\leq i < j \leq n} \rangle$ &$\{\text{Id}\}$  \\
            $Y_\text{task-3} = \sum_{i=1}^{5} (X_{2i} - X_{2i-1})$ & $\langle \{\cG_{i,i+2k}\}_{1\leq i < i+2k \leq n} \rangle$ &$\{\text{Id}\}$ \\ 
            $Y_\text{task-4} = \sum_{i=1}^{10} \prod_{j=1}^i \vone(X_j \geq 20) $ & $\{\text{Id}\}$ & $\langle \{\cG_{i,j}\}_{1\leq i < j \leq n} \rangle$ \\
             \bottomrule
        \end{tabular}
\end{table}

\paragraph{Groups.} We consider $m = \binom{n}{2}$ permutation groups for all the pair-wise permutations: $\cG_{1,2},\cG_{2,3},\cG_{1,3},\ldots,\cG_{n-1,n}$, where $\cG_{i, j} := \{T_\text{identity}, T_{i,j}\}$ and $T_{i,j}$ swaps positions $i$ and $j$ in the sequence.
For $\Sbar \subseteq \{(1,2), (2,3), (1,3), \ldots, (n-1, n)\}$, $\cG_\Sbar$ is defined as before as the join $\langle \cup_{(i, j) \in \mathcal{I}} ~\cG_{i, j}\rangle$.
We choose 4 different subsets $\Sbar$ of the given $m$ groups indicated by the second column of \cref{tab:sequence_tasks}. 
For our choices of $\Sbar \neq \emptyset$, we set $\Sdep = \emptyset$ to ensure that $\cG_\Sbar \trianglelefteq \overgroup$, i.e., $\cG_\Sbar$ is a normal subgroup of $\overgroup$. 

\paragraph{Tasks.} The label for the sequence $X^\text{(hid)}$ is obtained by applying an arithmetic function to $X^\text{(hid)}$ that is invariant to the chosen group $\cG_\Sbar$. 
The arithmetic functions are given in the first column of \cref{tab:sequence_tasks}. 
$Y_\text{task-1}$ is invariant to any permutation of the input elements $X_i, 1\leq i\leq n$. 
$Y_\text{task-2}$ is invariant to any permutation of input elements $X_i$ with indices $i > 1$ but sensitive to permutations that move $X_1$. 
$Y_\text{task-3}$ is invariant to permutations that move elements at even indices to even indices and elements at odd indices to odd indices respectively.
Finally, $Y_\text{task-4}$ is sensitive to all permutations (i.e., no invariance).

{\em Training data:} Recall that $X^\text{(hid)}$ is in a sorted order. Since the training data is sampled economically, it consists only of sequences under transformations that have an effect on the label, i.e., transformations from $\cG_\Sdep$. 
The observed input is obtained as $X = T_{U_\Sbar, U_\Sdep} \circ X^\text{(hid)}$, a transformation of the sorted input $X^\text{(hid)}$. Since $\cG_\Sbar \trianglelefteq \cG_\Sdep$ (by construction), we have that any $T_{U_\Sbar, U_\Sdep} = T_{\noise'_\Sbar | \noise_\Sdep} \circ T_{\noise_\Sdep}$, i.e., the transformation can be decomposed into one transformation from $\cG_\Sdep$ followed by another transformation from $\cG_\Sbar$. 
$U'_\Sbar ~|~ U_\Sdep$ in the subscript indicates that the transformation $T_{\noise'_\Sbar | \noise_\Sdep} \in \cG_\Sbar$ also depends on $U_\Sdep$. 
Under the assumption of economic sampling of training data, in all our experiments we sample a single value for $T_{\noise'_\Sbar | \noise_\Sdep} \in \cG_\Sbar$: we simply use $T_{\noise'_\Sbar | \noise_\Sdep} = T_\text{identity}$.

In conclusion, we obtain the observed sequence $X$ in the training data by applying a random transformation $T_{U_\Sdep} \in \cG_\Sdep$ to $X^\text{(hid)}$ and then applying a constant transformation (e.g., $T_\text{identity}$) from $\cG_\Sbar$ to the result.
The target $Y$ is computed by applying the arithmetic function corresponding to the task (see \cref{tab:sequence_tasks}) to $T_{U_\Sdep} \circ X^\text{(hid)}$
(recall from \cref{eq:Y} that $Y$ is a function of both $X^\text{(hid)}$ and $U_\Sdep$). 

{\em Extrapolation task:} The extrapolated test data consists of samples from the coupled random variable $\cfX$ (\Cref{def:coupling}). 
Unlike the training data that was economically sampled (i.e., with a single transformation from $\cG_\Sbar$), the extrapolated test data is obtained via the full range of transformations in $\cG_\Sbar$. 
Recall from \cref{def:coupling} that $\cfX = T_{\widetilde{\noise}_\Sbar,\noise_\Sdep} \circ X^\text{(hid)}$. 
As before, we decompose $T_{\widetilde{\noise}_\Sbar,\noise_\Sdep} = T_{\widetilde{\noise}'_\Sbar | \noise_\Sdep} \circ T_{\noise_\Sdep}$. However, there is no economic sampling for the test data: $T_{\widetilde{\noise}'_\Sbar | \noise_\Sdep}$ and $T_{\noise_\Sdep}$ are sampled randomly from $\cG_\Sbar$ and $\cG_\Sdep$ respectively.

In conclusion, we obtain the observed sequence $X$ in the test data by applying a random transformation $T_{U_\Sdep} \in \cG_\Sdep$ to $X^\text{(hid)}$ and then applying a random transformation from $\cG_\Sbar$ to the result. 
The target $Y$ is computed in a similar fashion as in the training data by applying the appropriate arithmetic function to $T_{U_\Sdep} \circ X^\text{(hid)}$. Note that $Y$ is invariant to $\cG_\Sbar$.

Example: Consider the the first row of \cref{tab:sequence_tasks} with $\Sbar = \{(i, j)\}_{1 \leq i < j \leq n}$, i.e., it contains all the $m = \binom{n}{2}$ groups. Then, the group $\cG_\Sbar$ is simply the full permutation group over $n$ elements. The target is defined as the sum of elements (which is fully permutation-invariant). 
The sequences in the training data are always sorted (because of the economic sampling of training data), whereas the sequences in test data have arbitrarily different permutations (by sampling random transformations from $\cG_\Sbar$). The task is simply to compute the sum of the elements of the sequence.

Sizes of the training data and the extrapolated test data are fixed at 8000 and 2000 respectively. We repeat all the experiments for 5 different random seeds.

\begin{table}[t]
    \centering
    \caption{{(\bf Sequence tasks)} Extrapolation test accuracies (\%) with 95\% confidence intervals for all the models ({\bf bold} means $p<0.05$ significant). The standard sequence models cannot extrapolate when $\Sbar \neq \emptyset$ whereas the forced G-invariant models cannot unlearn the invariances and fail when $\Sbar \subsetneq \{1,\ldots,m\}$.}
    \label{tab:sequencetasksappx}
    \resizebox{\textwidth}{!}{
    \begin{tabular}{lrrrr}
    \toprule
    & \multicolumn{4}{c}{$\cG_\Sbar$} \\
     &                      $\langle \{\cG_{i,j}\}_{1\leq i < j \leq n} \rangle$ &        $\langle \{\cG_{i,j}\}_{2\leq i < j \leq n} \rangle$ & $\langle \{\cG_{i,i+2k}\}_{1\leq i < i+2k \leq n} \rangle$ & $\{\text{Id}\}$ \\
    Model   &                              &                 &                     &                           \\
    \midrule
    DeepSets~\citep{zaheer2017}       &               {\bf 100.00 ( 0.00)} &    2.36 ( 2.37) &        0.97 ( 0.60) &             16.12 ( 8.21) \\
    Janossy pooling~\citep{murphy2018}        &                96.64 ( 3.13) &    9.55 ( 1.61) &        0.78 ( 0.52) &             21.22 ( 2.94) \\
    Set Transformer~\citep{lee2019} &                {\bf 99.57 ( 0.33)} &   10.68 ( 1.49) &        0.75 ( 0.28) &             23.38 ( 1.88) \\
    Transformer~\citep{vaswani2017}    &                20.26 (32.08) &   12.15 (16.05) &        0.85 ( 0.37) &            {\bf 100.00 ( 0.00)} \\
    GRU~\citep{cho2014properties}            &                 0.48 ( 0.48) &    0.47 ( 0.38) &        0.90 ( 0.77) &             {\bf 99.41 ( 1.58)} \\
    FF + CG-reg. (ours)         &               {\bf 100.00 ( 0.00)} &   {\bf 42.08 (18.99)} &       {\bf 71.85 (26.61)} &             95.70 ( 3.05) \\
    
    \bottomrule
    \end{tabular}
    }
    
\end{table}

\begin{table}[t]
\centering
\caption{{\bf (Sequence Tasks)} Validation and Extrapolation test accuracies (\%) with 95\% confidence intervals for different CG-regularization strength $\lambda$ in \cref{eq:constrained}. $\lambda$ is chosen only based on the validation accuracy: maximum $\lambda$ with validation accuracy within 5\% of the best validation accuracy ({\bf bold} values indicate the performance of this choice of $\lambda$).}
\label{tab:seqlambda}
\resizebox{\textwidth}{!}{
\begin{tabular}{lrrrrrrrrr}
\toprule
& \multicolumn{8}{c}{$\cG_\Sbar$} \\
 &            &          \multicolumn{2}{c}{$\langle \{\cG_{i,j}\}_{1\leq i < j \leq n} \rangle$} &        \multicolumn{2}{c}{$\langle \{\cG_{i,j}\}_{2\leq i < j \leq n} \rangle$} & \multicolumn{2}{c}{$\langle \{\cG_{i,i+2k}\}_{1\leq i < i+2k \leq n} \rangle$} & \multicolumn{2}{c}{$\{\text{Id}\}$} \\
 Model & $\lambda$ & Val. acc (\%) & Test acc (\%) & Val. acc (\%) & Test acc (\%) & Val. acc (\%) & Test acc (\%) & Val. acc (\%) & Test acc (\%)   \\
\midrule
FF + CG-reg. & 0.0   &             80.80 (84.15) &                54.34 (87.19) &            100.00 ( 0.00) &                80.36 (74.04) &             99.95 ( 0.10) &                22.78 (14.52) &             99.92 ( 0.27) &                99.86 ( 0.15) \\
      & 0.1   &             80.83 (84.04) &                59.18 (91.12) &            100.00 ( 0.00) &                65.13 (80.43) &             99.99 ( 0.05) &                60.66 (49.83) &             99.95 ( 0.10) &                99.98 ( 0.05) \\
      & 1.0   &             80.72 (84.48) &                80.03 (87.52) &             99.04 ( 4.22) &                61.81 (66.24) &            100.00 ( 0.00) &                68.34 (36.98) &             99.81 ( 0.55) &                99.76 ( 0.65) \\
      & 2.0   &             82.56 (72.85) &                63.16 (99.44) &            100.00 ( 0.00) &                77.97 (48.87) &             99.99 ( 0.05) &                69.20 (31.40) &             99.46 ( 0.53) &                99.37 ( 0.53) \\
      & 10.0  &             80.97 (74.10) &               62.83 (100.17) &             {\bf 98.14 ( 2.71)} &                {\bf 42.08 (18.99)} &            {\bf 100.00 ( 0.00)} &                {\bf 71.85 (26.61)} &             {\bf 95.56 ( 3.34)} &                {\bf 95.70 ( 3.05)} \\
      & 100.0 &            {\bf 100.00 ( 0.00)} &               {\bf 100.00 ( 0.00)} &             15.65 ( 3.63) &                 2.29 ( 0.96) &             93.42 (14.90) &                27.64 (24.85) &             65.92 (10.38) &                65.42 (10.30) \\
\bottomrule
\end{tabular}
}
\end{table}

\paragraph{Baselines and Architecture.}
We compare our approach with a) standard sequence models, specifically Transformers~\citep{vaswani2017} and GRUs~\citep{cho2014properties}, and b) forced permutation-invariant set models, specifically DeepSets~\citep{zaheer2017}, SetTransformer~\citep{lee2019} and Janossy Pooling~\citep{murphy2018}. 
An example of the proposed CG-invariant feedforward architecture is depicted in \cref{fig:cfgff}.

We optimize all models using Adam~\citep{kingma2014adam} with an initial learning rate in $\{10^{-2}, 10^{-3}, 10^{-4}\}$ and a batch size of 128. We use validation loss for early-stopping and to select the best hyperparameters for all models. 
Once again, we choose the best value for the CG-regularization strength $\lambda$ by choosing the maximum value of $\lambda$ with validation accuracy within 5\% of the maximum validation accuracy obtained from any $\lambda$. 
 \cref{tab:seqlambda} shows the effect of regularization strength on the performance of the model.
We observe that although $\lambda=10$ performs comparably to the rest in validation accuracy and is chosen consistently, it does not achieve the best possible extrapolation accuracy.

\cref{tab:sequencetasksappx} shows the complete set of results for all the models. The table clearly shows the issue with standard sequence models (cannot extrapolate when $\Sbar \neq \emptyset$) and the issue with forced G-invariant models (fail when $\Sbar \subsetneq \{1,\ldots,m\}$).
In \cref{tab:sequence_tasks} of the main text, we show the results for the best model out of all the permutation-invariant models in the column Best FF+G-inv.

\end{document}